\def\PP{{\cal P}}
\def\MM{{\cal MM}}
\def\SM{{\cal SM}}
\def\.{.}
\def\dots{\.\.\.}
\def\wrt{w.r.t.\ }
\def\<{\langle}
\def\>{\rangle}
\def\product{\cdot}
\newtheorem{definition}{Definition}
\newtheorem{example}{Example}
\newtheorem{lemma}{Lemma}
\newtheorem{theorem}{Theorem}
\def\natural{integer}
\def\nvar{\natural\ variable}
\def\nvars{\natural\ variables}
\def\lvar{logical variable}
\def\lvars{logical variables}
\newcommand{\ssize}[1]{||#1||}
\newcommand{\lfp}[1]{\mathit{T}_{#1}}
\newcommand{\ex}[2]{#1^{#2}}
\def\true{\mathit{true}}
\def\false{\mathit{false}}
\def\calc{{\cal C}}
\def\locally{rule-}
\def\Lb{Rule-bounded}
\def\LB{Rule-bounded}
\def\lb{rule-bounded}
\def\Gb{Cycle-bounded}
\def\gb{cycle-bounded}
\def\GB{Cycle-bounded}
\def\sb{size-bounded}
\def\Sb{Size-bounded}
\def\csb{cycle-size-bounded}
\def\PTIME{\mathit{PTIME}}
\def\NP{\mathit{NP}}
\def\co{\mathit{co}}
\def\tsize{total size}
\def\ts{tsize}
\def\ug{firing graph}
\def\Ug{Firing graph}
\newcommand{\unig}[1]{\Omega(#1)} 
\def\rbody{rbody}
\def\sbody{sbody}
\def\srbody{srbody}
\def\linear{\ell}
\def\relevant{relevant}
\def\ar{\mathit{arity}}
\newcommand{\nop}[1]{{}}
\begin{document}
\bibliographystyle{acmtrans}

\title{Using Linear Constraints for Logic Program Termination Analysis}
%
%

\author[M. Calautti, S. Greco, C. Molinaro and I. Trubitsyna]{
MARCO CALAUTTI, SERGIO GRECO,  \\
{\normalsize \em CRISTIAN MOLINARO, IRINA TRUBITSYNA}\\
DIMES, Universit\`{a} della Calabria\\
87036 Rende(CS), Italy\\
E-mail: \{calautti,greco,cmolinaro,trubitsyna\}@dimes.unical.it
}



\maketitle

\label{firstpage}

\begin{abstract}
It is widely acknowledged that function symbols are an important feature in answer set programming, as they make modeling easier, increase the expressive power, and allow us to deal with infinite domains.
The main issue with their introduction is that the evaluation of a program might not terminate and checking whether it terminates or not is undecidable.
To cope with this problem, several classes of logic programs have been proposed where the use of function symbols is restricted but the program evaluation termination is guaranteed. 
Despite the significant body of work in this area, current approaches do not include many simple practical programs whose evaluation terminates.
In this paper, we present the novel classes of \emph{\lb} and \emph{\gb\ programs}, which overcome different limitations of current approaches by performing a more global analysis of how terms are propagated from the body to the head of rules. 
Results on the correctness, the complexity, and the expressivity of the proposed approach are provided. 

\noindent
Under consideration in Theory and Practice of Logic Programming (TPLP).
\end{abstract}

\begin{keywords}
Answer set programming, function symbols, bottom-up evaluation, program evaluation termination, stable models
\end{keywords}

\section{Introduction}

Enriching answer set programming with function symbols has recently seen a surge in interest.
Function symbols make modeling easier, increase the expressive power, and allow us to deal with infinite domains.
At the same time, this comes at a cost: common inference tasks (e.g., cautious and brave reasoning) become undecidable.

Recent research has focused on identifying classes of logic programs that impose some limitations on the use of function symbols but guarantee decidability of common inference tasks.
Efforts in this direction are the class of \emph{finitely-ground} programs~\cite{CalimeriCIL08} 
and the more general class of \emph{bounded term-size} programs~\cite{RiguzziS13}.
Finitely-ground programs have a finite number of stable models, each of finite size,
whereas bounded term-size (normal) programs have a finite well-founded model. 
Unfortunately, checking if a logic program is bounded term-size or even finitely-ground is semi-decidable.

Considering the stable model semantics, decidable subclasses of finitely-ground programs have been proposed.
These include the classes of \emph{$\omega$-restricted programs}~\cite{Syrjanen01},
\emph{$\lambda$-restricted programs}~\cite{GebserST07},
\emph{finite domain programs}~\cite{CalimeriCIL08},
\emph{argument-restricted programs}~\cite{LierlerL09},
\emph{safe} and \emph{$\Gamma$-acyclic programs}~\cite{GrecoST12iclp,TPLP14},
\emph{mapping-restricted programs}~\cite{CalauttiPPDP13}, and 
\emph{bounded programs}~\cite{GrecoMT13ijcai}.
The above techniques, that we call \emph{termination criteria}, provide (decidable) sufficient conditions for a program to be finitely-ground.

Despite the significant body of work in this area, there are still many simple practical programs whose evaluation terminates but this is not detected by any of the current termination criteria.
Below is an example.

\begin{example}\label{ex:bubble}
Consider the following program $\PP_{\ref{ex:bubble}}$ implementing the bubble sort algorithm:
\[
  \begin{array}{l}
   r_0:\ \tt bub(L,[\,],[\,]) \leftarrow \tt input(L).\\
   r_1:\ \tt bub([Y|T],[X|Cur],Sol)  \leftarrow \tt bub([X|[Y|T]],Cur,Sol),X \leq Y.\\
   r_2:\ \tt bub([X|T],[Y|Cur],Sol)  \leftarrow  \tt bub([X|[Y|T]],Cur,Sol),Y<X.\\
   r_3:\ \tt bub(Cur,[\,],[X|Sol])  \leftarrow  \tt bub([X|[\,]],Cur,Sol).
  \end{array}
\]
\noindent
The list to be sorted is given by means of a fact of the form $\tt input([a_1,\dots,a_n])$.
The bottom-up evaluation of this program always terminates for any input list. 
The ordered list $\tt Sol$ can be obtained from the atom $\tt bub([\,],[\,],Sol)$ in the program's minimal model.~\hfill$\Box$
\end{example}

Although the bottom-up evaluation of $\PP_{\ref{ex:bubble}}$ always terminates for any input list, none of the termination criteria in the literature is able to realize it. 
One problem with them is that when they analyze how terms are propagated from the body to the head of rules, they look at arguments \emph{individually}.
For instance, in rule $r_1$ above, the simple fact that the second argument of $\tt bub$ has a size in the head greater than the one in the body prevents several techniques from realizing termination of the bottom-up evaluation of $\PP_{\ref{ex:bubble}}$.
More general classes such as mapping-restricted and bounded programs are able to do a more complex (yet limited) analysis of how some groups of arguments affect each other.
Still, all current termination criteria are not able to realize that in every rule of $\PP_{\ref{ex:bubble}}$ the \emph{overall} size of the terms in the head does not increase w.r.t. the \emph{overall} size of the terms in the body.
One of the novelties of the technique proposed in this paper is the capability of doing this kind of analysis, thereby identifying programs (whose evaluation terminates) that none of the current techniques include.

The technique proposed in this paper easily realizes that the bottom-up evaluation of $\PP_{\ref{ex:bubble}}$ always terminates for any input list.
In particular, this is done using linear constraints which measure the size of terms and atoms in order to check if the rules' head sizes are bounded by the size of some body atom when propagation occurs.
Thus, our technique can understand that, in every rule, the overall size of the terms in the body does not increase during their propagation to the head, as there is only a simple redistribution of terms.
Many practical programs dealing with lists and tree-like structures satisfy this property---below are two examples. 
However, our technique is not limited only to this kind of programs.

\begin{example}\label{ex:visit}
Consider the program $\PP_{\ref{ex:visit}}$ below, performing a depth-first traversal of an input tree:
 \[
  \begin{array}{l}
   r_0:\ \tt visit(Tree,[\,],[\,]) \leftarrow input(Tree).\\
   r_1:\ \tt visit(Left,[Root |Visited],[Right|ToVisit]) \leftarrow \\
   \hspace*{42mm} \tt visit(tree(Root,Left,Right), Visited,ToVisit).\\
   r_2:\ \tt visit(Next, Visited,ToVisit) \leftarrow  visit(null, Visited,[Next|ToVisit]).
  \end{array}
 \]
The input tree is given by means of a fact of the form  $\tt input(tree(value,left,right))$  
where $\tt tree$ is a ternary function symbol used to represent tree structures. The program visits the nodes of the tree and puts them in a list following 
a depth-first search. 
The list $\tt L$ of visited elements can be obtained from the atom $\tt visit(null,L,[\ ])$ in the program's minimal model.
For instance, if the input tree is
$$
\tt input(tree(a,tree(c,null,tree(d,null,null)), tree(b,null,null))).
$$
\noindent
the program produces the list $\tt [b,d,c,a]$ containing the nodes of the tree in opposite order w.r.t. the traversal.~\hfill$\Box$
\end{example}

Also in the case above, even if the program evaluation terminates for every input tree, none of the currently known techniques is able to 
detect it, while the technique proposed in this paper does.

\begin{example}\label{ex:append}
Consider the following program $\PP_{\ref{ex:append}}$ computing the concatenation of two lists: 
 \[
  \begin{array}{lll}
   r_0:\ \tt reverse(L_1,[\,]) & \leftarrow & \tt input1(L_1).\\
   r_1:\ \tt reverse(L_1,[X|L_2]) & \leftarrow & \tt reverse([X|L_1],L_2).\\
   r_2:\ \tt append(L_1,L_2) & \leftarrow & \tt reverse([\,],L_1),\ input2(L_2). \\
   r_3:\ \tt append(L_1,[X|L_2]) & \leftarrow & \tt append([X|L_1],L_2).
  \end{array}
 \]
\noindent
Here $\tt input1$ and $\tt input2$ are used to store the lists $\tt L_1$ and $\tt L_2$ to be concatenated.
The result list $\tt L$ can be retrieved from the atom $\tt append([\,],L)$ in the minimal model of $\PP_{\ref{ex:append}}$.
Clearly, the bottom-up evaluation of the program always terminates.~\hfill$\Box$
\end{example}

We point out that the problem of detecting decidable classes of programs is relevant not only from a theoretical point of view, as real applications make use of structured data and functions symbols (e.g., lists, sets, bags, arithmetic). 
Classical applications need the use of  structured data such as bill of materials consisting in the description of all items that compose a product, down to the lowest level of detail~\cite{CeriGT90}, management of strings in bioinformatics applications, managing and querying ontological data using logic languages~\cite{CaliGLMP10,ChaudhriHTW13}, as well as applications based on greedy and dynamic programming algorithms~\cite{GrecoZG92,Greco99}.

\vspace*{1mm}
\noindent {\bf Contribution.}
We propose novel techniques for checking if the evaluation of a logic program  terminates (clearly, we define sufficient conditions). 
Our techniques overcome several limitations of current approaches being able to perform a more global analysis of how terms are propagated from the body to the head of rules. 
To this end, we use linear constraints to measure and relate the size of head and body atoms.
We first introduce the class of \emph{\lb} programs, which looks at individual rules, and then propose the class of \emph{\gb} programs, which relies on the  analysis of groups of rules.
We show the correctness of the proposed techniques and provide upper bounds on their complexity.
We also study the relationship between the proposed classes and current termination criteria. 

\vspace*{1mm}
\noindent {\bf Organization.}
Section~\ref{sec:preliminaries} reports preliminaries on logic programs with function symbols. 
Sections~\ref{sec:locally_bounded} introduces the class of \lb\ programs, whereas Section~\ref{sec:correct-express-lb}
presents several theoretical results on its correctness and expressivity.
Section~\ref{sec:mutual_recursion} introduces the class of \gb\ programs along with results on its correctness and expressivity.
The complexity analysis is addressed in Section~\ref{sec:complexity}.
Related work and conclusions are reported in Sections~\ref{sec:related-work}~and~\ref{sec:conclusions}, respectively.

\section{Preliminaries}\label{sec:preliminaries}
This section recalls 
syntax and the stable model semantics of logic programs with function symbols~\cite{GelLif88,2012Gebser}.

\vspace*{1mm}
\noindent {\bf Syntax.}
We assume to have (pairwise disjoint) infinite sets of \emph{\lvars}, \emph{predicate symbols}, and \emph{function symbols}.
Each predicate and function symbol $g$ is associated with an \emph{arity}, denoted $\ar(g)$, which is a non-negative integer. Function symbols of arity 0
are called \emph{constants}.
Variables appearing in logic programs are called ``\lvars'' and will be denoted by upper-case letters in order to distinguish them from variables appearing in linear constraints, which are called ``\nvars'' and will be denoted by lower-case letters.
A \emph{term} is either a \lvar, or an expression of the form $f(t_1,\dots,t_m)$, where $f$ is a function symbol of arity $m \ge 0$ and $t_1,\dots,t_m$ are terms.

An \emph{atom} is of the form $p(t_1,\dots,t_n)$, where $p$ is a predicate symbol of arity $n \ge 0$ and $t_1,\dots,t_n$ are terms. 
A \emph{literal} is an atom $A$ (\emph{positive} literal) or its negation $\neg A$ (\emph{negative} literal).

A \emph{rule} $r$ is of the form
$
A_1 \vee \dots \vee A_m \leftarrow B_1,\dots, B_k, \neg C_1,\dots, \neg C_n
$,
where $m >0$,  $k\geq 0$, $n \geq 0$, and $A_1,\dots ,A_m,B_1,\dots,B_k,$ $C_1,$ $\dots,C_n$ are atoms. 
The disjunction $A_1 \vee \dots \vee A_m$ is called the \emph{head} of $r$ and is denoted by $head(r)$. 
The conjunction $B_1,\dots, B_k, \neg C_1,\dots, \neg C_n$ is called the \emph{body} of $r$ and is denoted by $body(r)$.
With a slight abuse of notation, we sometimes use $body(r)$ (resp. $head(r)$) to also denote the \emph{set} of literals appearing in the body (resp. head) of $r$. 
If $m=1$, then $r$ is {\em normal}; in this case, $head(r)$ denotes the head atom. 
If $n=0$, then $r$ is {\em positive}. 

A \emph{program} is a finite set of rules.
A program is \emph{normal} (resp. \emph{positive}) if every rule in it is normal (resp. positive).
We assume that programs are \emph{range restricted}, i.e., for every rule, every \lvar\ appears in some positive body literal.
W.l.o.g., we also assume that different rules do not share \lvars.

A term (resp. atom, literal, rule, program) is {\em ground} if no \lvars\ occur in it.
A ground normal rule with an empty body is also called a \emph{fact}. 
A predicate symbol $p$ is \emph{defined by} a rule $r$ if $p$ appears in the head of $r$.

A \emph{substitution} $\theta$ is of the form $\{X_1/t_1,\dots,X_n/t_n\}$, where  $X_1,\dots,X_n$ are distinct \lvars\ and $t_1,\dots,t_n$ are terms. 
The result of applying $\theta$ to an atom (or term) $A$, denoted $A \theta$, is the atom (or term) obtained from $A$ by simultaneously replacing each occurrence of a \lvar\ $X_i$ in $A$ with $t_i$ if $X_i/t_i$ belongs to $\theta$.
Two atoms $A_1$ and $A_2$ \emph{unify} if there exists a substitution $\theta$, called a \emph{unifier} of $A_1$ and $A_2$, such that $A_1\theta =A_2\theta$.
The \emph{composition} of two substitutions $\theta=\{X_1/t_1,\dots,X_n/t_n\}$ and $\vartheta=\{Y_1/u_1,\dots,Y_m/u_m\}$, denoted $\theta\circ\vartheta$, is the substitution obtained from the set $\{X_1/t_1\vartheta,\dots,X_n/t_n\vartheta,$ $Y_1/u_1,\dots,Y_m/u_m\}$ by removing every $X_i/t_i\vartheta$ such that $X_i=t_i\vartheta$ and every $Y_j/u_j$ such that $Y_j \in \{X_1,\dots,X_n\}$.
A substitution $\theta$ is \emph{more general} than a substitution $\vartheta$ if there exists a substitution $\eta$ such that $\vartheta=\theta\circ\eta$.
A unifier $\theta$ of $A_1$ and $A_2$ is called a \emph{most general unifier} (mgu) of $A_1$ and $A_2$ if it is more general than any other unifier of $A_1$ and $A_2$
(indeed, the mgu is unique modulo renaming of \lvars).

\vspace*{1mm}
\noindent {\bf Semantics.}
Consider a program $\PP$.
The \emph{Herbrand universe} $H_{\PP}$ of $\PP$ is the
possibly infinite set of ground terms  
constructible using
function symbols (and thus, also constants) appearing in $\PP$. The \emph{Herbrand base}
$B_{\PP}$ of $\PP$ is the set of ground atoms 
constructible using predicate symbols appearing in $\PP$ and ground terms
of $H_{\cal P}$.

A rule (resp. atom) $r'$ is a {\em ground instance} of a rule (resp. atom) $r$ in $\PP$ if $r'$ can be obtained from $r$ by substituting every \lvar\ in $r$ with some ground term in $H_{\mathcal P}$.
We use $ground(r)$ to denote the set of all ground instances of $r$ and define $ground(\PP)$ to denote the set of all ground instances of the rules in $\PP$, i.e., $ground(\PP)=\cup_{r\in\PP} ground(r)$.

An \emph{interpretation} of $\PP$ is any subset $I$ of $B_{\cal P}$.
The truth value of a ground atom $A$ \wrt $I$, denoted $value_I(A)$, is $\true$ if $A \in I$, $\false$ otherwise.
The truth value of $\neg A$ \wrt $I$, denoted $value_I(\neg A)$, is $\true$ if $A \not\in I$, $\false$ otherwise.
A ground rule $r$ is {\em satisfied} by $I$, denoted $I\models r$, if there is a ground literal $L$ in $body(r)$ s.t. $value_I(L)=\false$ or there is a ground atom $A$ in $head(r)$ s.t. $value_I(A)=\true$. 
Thus, if the body of $r$ is empty, $r$ is satisfied by $I$ if there is an atom $A$ in $head(r)$ s.t. $value_I(A)\!=\!\true$.
An interpretation of $\PP$ is a \emph{model} of $\PP$ if it satisfies every ground rule in $ground(\PP)$.
A model $M$ of $\PP$ is minimal if no proper subset of $M$ is
a model of $\PP$.
The set of minimal models of $\PP$ is denoted by $\MM(\PP)$.

Given an interpretation $I$ of $\PP$, let $\PP^I$ denote the ground positive program derived
from $ground(\PP)$ by \emph{(i)} removing every rule containing a
negative literal $\neg A$ in the body with $A \in I$, and \emph{(ii)}
removing all negative literals from the remaining rules.
An interpretation $I$ is a \emph{stable model} of $\PP$ if $I \in \MM(\PP^I)$.
The set of stable models of $\PP$ is denoted by $\SM(\PP)$.
It is well known that stable models are minimal models (i.e., $\SM(\PP) \subseteq \MM(\PP)$), and $\SM(\PP) = \MM(\PP)$ for positive programs.

A positive normal program $\PP$ has a unique minimal model, which, with a slight abuse of notation, we denote as $\MM(\PP)$.
The \emph{immediate consequence operator} of $\PP$ is a function $\lfp{\PP}:2^{B_{\PP}}\rightarrow 2^{B_{\PP}}$ defined as follows: for every interpretation $I$, $\lfp{\PP}(I) = \{A\mid A \leftarrow B_1,\dots,B_n \in ground(\PP) \mbox{ and } \{B_1,\dots,B_n\} \subseteq I\}$.
The $i$-th iteration of $\lfp{\PP}$ ($i \geq 1$) w.r.t. an interpretation $I$ is defined as follows: $\lfp{\PP}^1(I)=\lfp{\PP}(I)$ and $\lfp{\PP}^i(I)=\lfp{\PP}(\lfp{\PP}^{i-1}(I))$ for $i > 1$.
The minimal model of $\PP$ coincides with $\lfp{\PP}^\infty(\emptyset)$.

\vspace*{1mm}
\noindent {\bf Finite programs.}
A program $\PP$ is said to be \emph{finite} under stable model semantics 
if, for every finite set of facts $D$, the program $\PP \cup D$ admits a finite number of stable models and each is of finite size, that is, $|\SM(\PP \cup D)|$ is finite and every stable model $M \in \SM(\PP \cup D)$ is finite.

Equivalently, a positive normal program $\PP$ is \emph{finite} if for every finite set of facts $D$,
there is a finite natural number $n$ such that $\lfp{\PP \cup D}^n(\emptyset)=\lfp{\PP \cup D}^\infty(\emptyset)$.
We call such programs \emph{terminating}.
In this paper we study new conditions under which a positive normal program $\PP$ is terminating. It is worth mentioning that
such conditions can be easily extended to general programs. This will be shown in the next section. 

\section{\LB\ Programs}\label{sec:locally_bounded}

In this section, we present \emph{\lb\ programs}, a class of programs whose evaluation always terminates and for which checking membership in the class is decidable.
Their definition relies on a novel technique which uses linear inequalities to measure terms and atoms' sizes and checks if the size of the head of a rule is always bounded by the size of a mutually recursive body atom (we will formally define what ``mutually recursive'' means in Definition~\ref{def:mutually-recursive-atom} below).

For ease of presentation, we restrict our attention to positive normal programs.
However, our technique can be applied to an arbitrary program $\PP$ with disjunction in the head and negation in the body by considering a positive normal program $st(\PP)$ derived from $\PP$ as follows.
Every rule $A_1\vee \dots \vee A_m\leftarrow body$ in $\PP$ is replaced with $m$ positive normal rules of the form $A_i \leftarrow body^{+}$ ($1 \leq i \leq m$) where $body^{+}$ is obtained from $body$ by deleting all negative literals.
In fact, the minimal model of $st(\PP)$ contains every stable model of $\PP$~\cite{GrecoST12iclp}---whence, the termination
of $st(\PP)$, which implies finiteness and computability of the minimal model will also imply that $\PP$ has a finite number of stable models, each of finite size, which can be computed.
In the rest of the paper, a program is understood to be positive and normal. We start by introducing some preliminary notions. 

\begin{definition}[\Ug]\label{def:unification-graph}
The \emph{\ug} of a program $\PP$, denoted $\unig{\PP}$, is a directed graph whose nodes are the rules in $\PP$ and such that there is an edge $\<r,r'\>$ if there exist two (not necessarily distinct) rules $r,r'\in\PP$ s.t. $head(r)$ and an atom in $body(r')$ unify.~\hfill$\Box$
\end{definition}

Intuitively, an edge $\<r,r'\>$ of $\unig{\PP}$ means that rule $r$ may cause rule $r'$ to ``fire''. 
The \ug\ of program $\PP_{\ref{ex:bubble}}$ of Example~\ref{ex:bubble} is depicted in Figure~\ref{fig:activation}.
In the definition above, when $r=r'$ we assume that $r$ and $r'$ are two ``copies'' that do not share any \lvar.

\begin{figure}[t!]
  \centering
 \includegraphics[height=2.4cm]{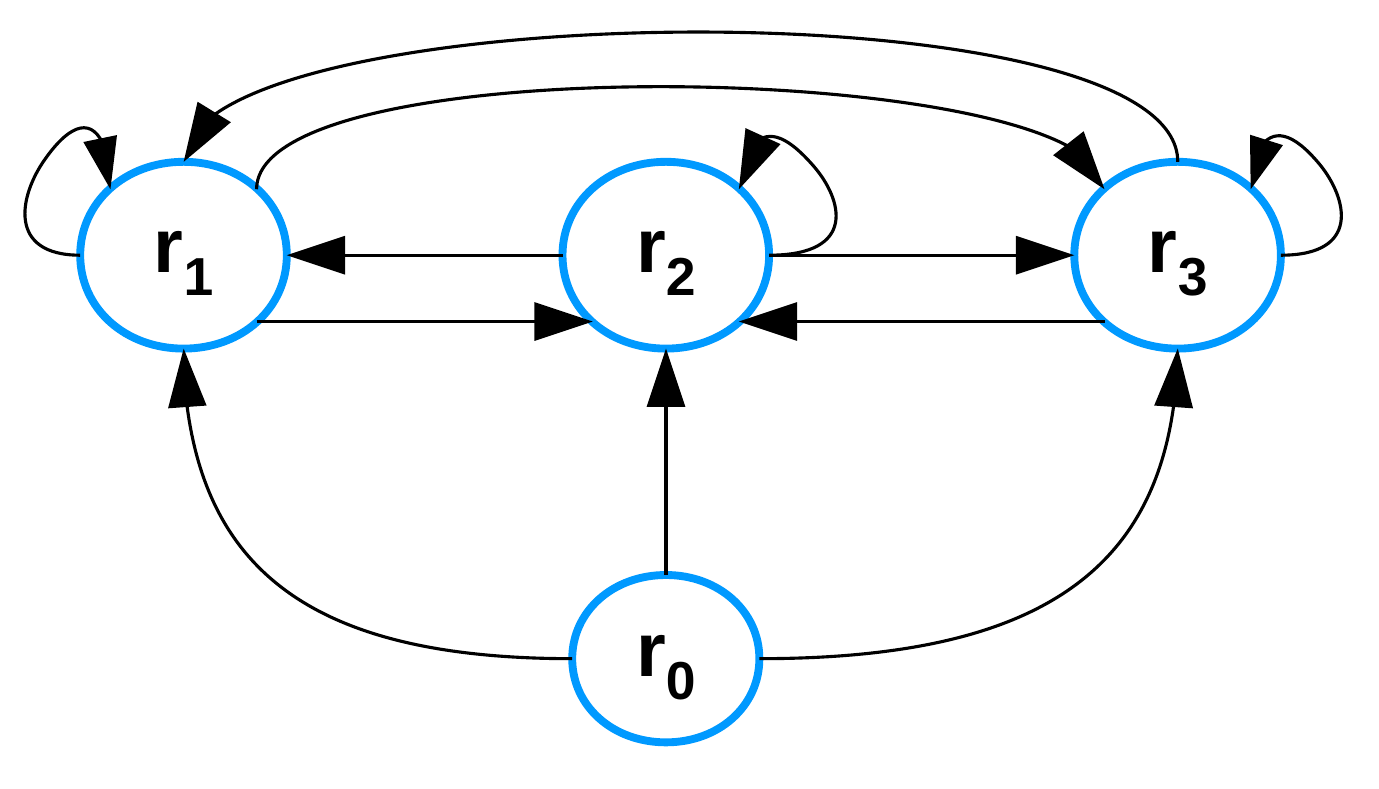}
   \caption{\Ug\ of~$\PP_{\ref{ex:bubble}}$.}
   \label{fig:activation}
\end{figure}

We say that a rule $r$ \emph{depends on} a rule $r'$ if $r$ can be reached from $r'$ through the edges of $\unig{\PP}$.
A \emph{strongly connected component} (SCC) of a directed graph $G$ is a maximal set $\calc$ of nodes of $G$ s.t. every node of $\calc$ can be reached from every node of $\calc$ (through the edges in $G$).
We say that an SCC $\calc$ is \emph{non-trivial} if there exists at least one edge in $G$ between two not necessarily distinct nodes of $\calc$.
For instance, the \ug\ in Figure~\ref{fig:activation} has two SCCs, $\calc_1=\{r_0\}$ and $\calc_2=\{r_1,r_2,r_3\}$, but only $\calc_2$ is non-trivial.
Given a program $\PP$ and an SCC $\calc$ of $\unig{\PP}$, $pred(\calc)$ 
denotes the set of predicate symbols defined by the rules in $\calc$.
We now define when the head atom and a body atom of a rule are mutually recursive.

\begin{definition}[Mutually recursive atoms]\label{def:mutually-recursive-atom}
Let $\PP$ be a program and $r$ a rule in $\PP$.
The head atom $A=head(r)$ and an atom $B\in body(r)$ are \emph{mutually recursive} if there is an SCC $\calc$ of $\unig{\PP}$ s.t.: 
\begin{enumerate}
\item
$\calc$ contains $r$, and 
\item 
$\calc$ contains a rule $r'$ (possibly equal to $r$) s.t. $head(r')$ and $B$ unify.~\hfill$\Box$
\end{enumerate}
\end{definition} 

In the previous definition, when $r=r'$ we assume that $r$ and $r'$ are two ``copies'' that do not share any \lvar.
Intuitively, the head atom $A$ of a rule $r$ and an atom $B$ in the body of $r$ are mutually recursive when there might be an actual propagation of terms from $A$ to $B$ (through the application of a sequence of rules).
As a very simple example, if we have an SCC consisting only of the rule $\tt p(f(X)) \leftarrow p(X), p(g(X))$, the first body atom is mutually recursive with the head, while the second one is not as it does not unify with the head atom.

Given a rule $r$, we use $\rbody(r)$ to denote the set of atoms in $body(r)$ which are mutually recursive with $head(r)$.
Moreover, we define $\sbody(r)$ as the set consisting of every atom in $body(r)$ that contains all \lvars\ appearing in $head(r)$, and define $\srbody(r)=\rbody(r)\cap \sbody(r)$.

We say that a rule $r$ in a program $\PP$ is \emph{\relevant} if it is not a fact and the set of atoms $body(r) \setminus \rbody(r)$ does not contain all \lvars\ in $head(r)$.
Roughly speaking, a non-relevant rule will be ignored because either it cannot propagate terms or its head size is bounded by body atoms which are not mutually recursive with the head.
We illustrate the notions introduced so far in the following example.

\begin{example}\label{ex:simple_locally_bounded}
Consider the following program $\PP_{\ref{ex:simple_locally_bounded}}$:
\[
  \begin{array}{lll}
   r_1:\ \underbrace{\tt s(f(X),Y)}_\text{A} & \leftarrow & \tt \underbrace{\tt q(X,f(Y))}_\text{B},\ \underbrace{\tt s(Z,f(Y))}_\text{C}.\\
   r_2:\ \underbrace{\tt q(f(U),V)}_\text{D} & \leftarrow & \tt \underbrace{\tt s(U,f(V))}_\text{E}.\\
  \end{array}
 \]
The \ug\ consists of the edges $\<r_1,r_1\>$, $\<r_1,r_2\>$, $\<r_2,r_1\>$.
Thus, there is only one SCC $\calc=\{r_1,r_2\}$, which is non-trivial, and $pred(\calc)=\tt \{q,s\}$.
Atoms $A$ and $B$ (resp. $A$ and $C$, $D$ and $E$) are mutually recursive.
Moreover, $\rbody(r_1)=\{B,C\}$, $\srbody(r_1)=\{B\}$, $\rbody(r_2)=\srbody(r_2)=\{E\}$.
Both $r_1$ and $r_2$ are \relevant.~\hfill$\Box$
\end{example}

We use $\mathbb{N}$ to denote the set of natural numbers $\{1,2,3,\dots\}$ and $\mathbb{N}_0$ to denote the set of natural numbers including the zero.
Moreover, $\mathbb{N}^k=\{(v_1,\dots,v_k)\mid  v_i \in \mathbb{N} \mbox{ for } 1 \leq i \leq k\}$ and $\mathbb{N}_0^k=\{(v_1,\dots,v_k)\mid  v_i \in \mathbb{N}_0 \mbox{ for } 1 \leq i \leq k\}$.  
Given a $k$-vector $\overline{v}=(v_1,\dots,v_k)$ in $\mathbb{N}_0^k$, we use $\overline{v}[i]$ to refer to $v_i$, for $1 \leq i \leq k$.
Given two $k$-vectors $\overline{v}=(v_1,\dots,v_k)$ and $\overline{w} = (w_1,\dots,w_k)$ in $\mathbb{N}_0^k$, we use $\overline{v} \product \overline{w}$ to denote the classical scalar product, i.e.,
$\overline{v} \product \overline{w} = \sum_{i=1}^k v_i \cdot w_i$.

As mentioned earlier, the basic idea of the proposed technique is to measure the size of terms and atoms in order to check if the rules' head sizes are bounded when propagation occurs.
Thus, we introduce the notions of term and atom size. 

\begin{definition}\label{def:size}
Let $t$ be a term. 
The \emph{size} of $t$ is recursively defined as follows:
$$size(t)=
\begin{cases} 
x &  \mbox{ if } t \mbox{ is a \lvar\ } X ; \\ 
m + \sum\limits_{i=1}^m size(t_i) & \mbox{ if }  t = f(t_1,\dots,t_m). 
\end{cases} 
$$
\noindent
where $x$ is an \nvar.
The \emph{size} of an atom $A=p(t_1,\dots,p_n)$, 
denoted $size(A)$, is the $n$-vector $(size(t_1),\dots,size(t_n))$.~\hfill$\Box$
\end{definition}

In the definition above, an \nvar\ $x$ intuitively represents the possible sizes that the \lvar\ $X$ can have during the bottom-up evaluation.
The size of a term of the form $f(t_1,\dots,t_m)$ is defined by summing up the size of its terms $t_i$'s plus the arity $m$ of $f$.
Note that from the definition above, the size of every constant is 0.
\begin{example}\label{ex:term-size}
Consider rule $r_1$ of program $\PP_{\ref{ex:bubble}}$ (see Example~\ref{ex:bubble}).
Using $\tt lc$ to denote the list constructor operator ``$|$'', the rule can be rewritten as follows:
$$
\tt bub(lc(Y,T),lc(X,Cur),Sol)  \leftarrow \tt bub(lc(X,lc(Y,T)),Cur,Sol),X \leq Y.
$$
Let $A$ (resp. $B$) be the atom in the head (resp. the first atom in the body).
Then,
\[
\begin{array}{lll}
\hspace*{2.5cm} size(A) & = & (2+y+t,\ \ 2+x+cur,\ \  sol) \\
\hspace*{2.5cm} size(B) & = & (2+[x+(2+y+t)],\ \ cur,\ \ sol) \hspace*{2.5cm}\Box
\end{array}
\]
\end{example}

We are now ready to define \lb\ programs.

\begin{definition}[\Lb\ programs]\label{def:lb-program}
Let $\PP$ be a program, $\calc$ a non-trivial SCC of $\unig{\PP}$, and $pred(\calc)=\{p_1,\dots,p_k\}$.
We say that $\calc$ is \emph{\lb} if there exist $k$ vectors 
$\overline{\alpha}_{p_h} \in \mathbb{N}^{\ar(p_h)}$, $1 \leq h \leq k$,  such that for every \relevant\ rule $r \in \calc$ with $A=head(r)=p_i(t_1,\dots,t_n)$, there exists an atom $B=p_j(u_1,\dots,u_m)$ in $\srbody(r)$ s.t. the following inequality is satisfied
$$
\overline{\alpha}_{p_j} \product size(B) - \overline{\alpha}_{p_i} \product size(A) \ge 0
$$
for every non-negative value of the \nvars\ in $size(B)$ and $size(A)$. 

We say that $\PP$ is \emph{\lb} if every non-trivial SCC of $\unig{\PP}$ is \lb.~\hfill$\Box$
\end{definition}

Intuitively, for every relevant rule of a non-trivial SCC of $\unig{\PP}$, Definition \ref{def:lb-program} checks if the  size of the head atom is bounded by the size of a mutually recursive body atom for all possible sizes the terms can assume.

\begin{example}
Consider again program $\PP_{\ref{ex:simple_locally_bounded}}$ of Example~\ref{ex:simple_locally_bounded}.
Recall that the only non-trivial SCC of $\unig{\PP_{\ref{ex:simple_locally_bounded}}}$ is $\calc=\{r_1,r_2\}$, and both $r_1$ and $r_2$ are \relevant. 
To determine if the program is \lb\ we need to check if $\calc$ is \lb.
Thus, we need to find $\overline{\alpha}_q, \overline{\alpha}_s \in \mathbb{N}^2$ such that there is an atom in $\srbody(r_1)$ and an atom in $\srbody(r_2)$ which satisfy the two inequalities derived from $r_1$ and $r_2$ for all non-negative values of the \nvars\ therein.
Since both $\srbody(r_1)$ and $\srbody(r_2)$ contain only one element, we have only one choice, namely the one where $B$ is selected for $r_1$ and $E$ is selected for $r_2$.

\noindent Thus, we need to check if there exist $\overline{\alpha}_q, \overline{\alpha}_s \in \mathbb{N}^2$ s.t. the following linear constraints are satisfied for all non-negative values of the \nvars\ appearing in them:
\[
  \begin{array}{lll}
 \!
 \begin{cases}
    \overline{\alpha}_q \product size(B) - \overline{\alpha}_s \product size(A)  \ge 0 \\
    \overline{\alpha}_s \product size(E) - \overline{\alpha}_q \product size(D)  \ge 0
 \end{cases}
 & \! \! \! \! \! \!  \Rightarrow \! &
  \begin{cases}
    \overline{\alpha}_q \product (x,1+y) - \overline{\alpha}_s \product (1+x,y)  \ge 0 \\
    \overline{\alpha}_s \product (u,1+v) - \overline{\alpha}_q \product (1+u,v)  \ge 0
 \end{cases}
 \end{array}
\]
By expanding the scalar products and isolating every \nvar\ we obtain:
\[
 \begin{cases}
(\overline{\alpha}_q[1] - \overline{\alpha}_s[1]) \product x + (\overline{\alpha}_q[2] - \overline{\alpha}_s[2]) \product y + (\overline{\alpha}_q[2] - \overline{\alpha}_s[1])  \geq 0 \\
  (\overline{\alpha}_s[1] - \overline{\alpha}_q[1]) \product u + (\overline{\alpha}_s[2] - \overline{\alpha}_q[2]) \product v + (\overline{\alpha}_s[2] - \overline{\alpha}_q[1])  \geq 0
 \end{cases}
 \]
The previous inequalities must hold for all $x,y,u,v \in \mathbb{N}_0$; it is easy to see that this is the case iff the following system admits a solution:
\[\begin{cases}
 \begin{array}{lll}
  \overline{\alpha}_q[1] - \overline{\alpha}_s[1] \ge 0, & \ \ \overline{\alpha}_q[2] - \overline{\alpha}_s[2] \ge 0, & \ \ \overline{\alpha}_q[2] - \overline{\alpha}_s[1] \geq 0, \\
  \overline{\alpha}_s[1] - \overline{\alpha}_q[1] \ge 0, & \ \ \overline{\alpha}_s[2] - \overline{\alpha}_q[2] \ge 0, & \ \ \overline{\alpha}_s[2] - \overline{\alpha}_q[1]  \geq 0
 \end{array}
 \end{cases}
 \]
Since a solution does exist, e.g. $\overline{\alpha}_s[1]=\overline{\alpha}_s[2]=\overline{\alpha}_q[1]=\overline{\alpha}_q[2]=1$ (recall that every $\overline{\alpha}[i]$ must be greater than 0), the SCC $\calc$ is \lb, and so is the program.~\hfill$\Box$
\end{example}

The method in the previous example  to find vectors $\overline{\alpha}_p$ for all $p \in pred(\calc)$ can~always be applied. 
That is, we can always isolate the \nvars\ in the original inequalities and then derive one inequality for each expression that multiplies  an \nvar\ plus the one for the constant term, imposing that all such expressions must be greater than or equal to 0---we precisely state this property in Lemma~\ref{lem:constraint}. 

It is worth noting that the proposed technique can easily recognize many terminating practical programs where terms are simply exchanged from the body to the head of rules (e.g., see Examples~\ref{ex:bubble},~\ref{ex:visit},~and~\ref{ex:append}).

\begin{example}\label{ex:bubble_continued}
Consider program $\PP_{\ref{ex:bubble}}$ of Example~\ref{ex:bubble}. 
Recall that the only non-trivial SCC of $\unig{\PP_{\ref{ex:bubble}}}$ is $\{r_1,r_2,r_3\}$ (see Figure~\ref{fig:activation}) and all rules in it are \relevant.
Since $|\srbody(r_i)|=1$ for every $r_i$ in the SCC, we have only one set of inequalities, which is the following one after isolating \nvars (we assume that
the empty list is represented by a simple constant): 
$$\begin{cases}
 (\overline{\alpha}_{b}[1] - \overline{\alpha}_{b}[2]) \product x_1 + (2\overline{\alpha}_{b}[1] -2\overline{\alpha}_{b}[2]) \ge 0 \\
 (\overline{\alpha}_{b}[1] - \overline{\alpha}_{b}[2]) \product y_2 + (2\overline{\alpha}_{b}[1] -2\overline{\alpha}_{b}[2]) \ge 0 \\
 (\overline{\alpha}_{b}[1] - \overline{\alpha}_{b}[3]) \product x_3 + (\overline{\alpha}_{b}[2] - \overline{\alpha}_{b}[1])\product cur_3 + (2\overline{\alpha}_{b}[1] - 2\overline{\alpha}_{b}[3]) \ge 0
\end{cases}
$$
where subscript $b$ stands for predicate symbol $\tt bub$, whereas subscripts associated with integer variables
are used to refer to the occurrences of logical variables in different rules (e.g., $y_2$ is the integer variable
associated to the logical variable $\tt Y$ in rule $r_2$).
A possible solution is $\overline{\alpha}_{{}_{b}}=(1,1,1)$ and thus $\PP_{\ref{ex:bubble}}$ is \lb.

Considering program $\PP_{\ref{ex:visit}}$ of Example~\ref{ex:visit}, we obtain the following constraints:
$$
\begin{cases}
(\overline{\alpha}_{v}[1] - \overline{\alpha}_{v}[2])\product root_1 + (\overline{\alpha}_{v}[1] - \overline{\alpha}_{v}[3])\product right_1 + (3\overline{\alpha}_{v}[1] -2\overline{\alpha}_{v}[2] - 
2\overline{\alpha}_{v}[3])  \ge 0 \\
(\overline{\alpha}_{v}[3] - \overline{\alpha}_{v}[1]) \product next_2 + 2\overline{\alpha}_{v}[3]  \ge 0
\end{cases}
$$
where subscript $v$ stands for predicate symbol $\tt visit$.
By setting $\overline{\alpha}_{v} = (2,1,2)$, we get positive integer values of $\overline{\alpha}_{v}[1],\overline{\alpha}_{v}[2],\overline{\alpha}_{v}[3]$ s.t. the inequalities above are satisfied for all $root_1,$ $right_1,$ $next_2 \in\mathbb{N}_0$.
Thus, $\PP_{\ref{ex:visit}}$ is \lb.

The \ug\ of program $\PP_{\ref{ex:append}}$ of Example~\ref{ex:append} has two non-trivial SCCs $\calc_1=\{r_1\}$ and $\calc_2=\{r_3\}$.
The constraints for $\calc_1$ are:
\begin{center}
$
\begin{cases}
 (\overline{\alpha}_{r}[1] -\overline{\alpha}_{r}[2] ) \product x_1 + (2\overline{\alpha}_{r}[1] - 2\overline{\alpha}_{r}[2]) \ge 0 
\end{cases}
$
\end{center}
\noindent
where subscript $r$ stands for predicate symbol $\tt reverse$.
It is easy to see that by choosing any (positive integer) values of $\overline{\alpha}_{r}[1]$ and $\overline{\alpha}_{r}[2]$ such that $\overline{\alpha}_{r}[1] \ge \overline{\alpha}_{r}[2]$, the inequality above holds for all $x_1 \in \mathbb{N}_0$. 
Likewise, the constraints for $\calc_2$ are
\begin{center}
$
\begin{cases}
 (\overline{\alpha}_{a}[1] -\overline{\alpha}_{a}[2] ) \product x_3 + (2\overline{\alpha}_{a}[1] - 2\overline{\alpha}_{a}[2]) \ge 0 
\end{cases}
$
\end{center}
where subscript $a$ stands for predicate symbol $\tt append$.
By choosing any (positive integer) values of $\overline{\alpha}_{a}[1]$ and $\overline{\alpha}_{a}[2]$ such that $\overline{\alpha}_{a}[1] \ge \overline{\alpha}_{a}[2]$, the inequality above holds for all $x_3 \in \mathbb{N}_0$. 
Thus, $\PP_{\ref{ex:append}}$ is \lb.~\hfill$\Box$
\end{example}

\section{Correctness and expressiveness}\label{sec:correct-express-lb}

In this section, we show that every \lb\ program is terminating and provide results on the relative expressiveness of \lb\ programs and other criteria. 

Note that every program $\PP$ can be partitioned into an ordered sequence of sub-programs $\PP_1,\dots,\PP_n$, called \emph{stratification}, such that, for every $1 \leq i \leq n$, every rule $r$ in $\PP_i$
depends only on rules belonging to some sub-program $\PP_j$ with $1 \leq j \leq i$.
Recall that a rule $r$ depends on a rule $r'$ if $r$ can be reached from $r'$ through the edges of the \ug.
Moreover, there always exists a stratification where every sub-program $\PP_i$ is either a non-trivial SCC or a set of trivial SCCs.
Given a set of facts $D$, it is well known that $\MM(\PP \cup D)$ can be defined in terms of the minimal model of the $\PP_i$'s following the order of the partition as follows: if $M_0 = D$ and $M_{i} = \MM(\PP_i \cup M_{i-1})$ for $1 \leq i \leq n$, then $M_n=\MM(\PP \cup D)$.

\begin{lemma}\label{lem:scc-termination}
A program $\PP$ is terminating iff every non-trivial SCC of $\unig{\PP}$ is terminating.
\end{lemma}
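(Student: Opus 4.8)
The plan is to prove the two directions separately, working throughout with the characterization that, for a positive normal program and a fixed finite set of facts $D$, termination is equivalent to finiteness of the minimal model $\MM(\PP \cup D)=\lfp{\PP \cup D}^\infty(\emptyset)$. This equivalence holds because $\PP$ and $D$ are finite, so each iterate $\lfp{\PP \cup D}^i(\emptyset)$ is finite (an easy induction, using range restriction so that the head of a firing ground rule is determined by finitely many body atoms drawn from a finite interpretation), and an increasing chain of subsets of a finite set stabilizes in finitely many steps; conversely, if the chain stabilizes at a finite stage its union is finite.

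For the forward direction I would argue by contraposition. Suppose some non-trivial SCC $\calc$ of $\unig{\PP}$ is not terminating, so there is a finite $D$ with $\MM(\calc \cup D)$ infinite. Since $\calc \subseteq \PP$ we have $ground(\calc \cup D) \subseteq ground(\PP \cup D)$, hence $\lfp{\calc \cup D}(I) \subseteq \lfp{\PP \cup D}(I)$ for every interpretation $I$. A routine induction using monotonicity of the immediate consequence operator then gives $\lfp{\calc \cup D}^i(\emptyset) \subseteq \lfp{\PP \cup D}^i(\emptyset)$ for all $i$, and taking unions yields $\MM(\calc \cup D) \subseteq \MM(\PP \cup D)$. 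Thus $\MM(\PP \cup D)$ is infinite and $\PP$ is not terminating.

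For the converse I would use a stratification $\PP_1,\dots,\PP_n$ chosen, as noted before the statement, so that each $\PP_i$ is either a non-trivial SCC of $\unig{\PP}$ or a set of trivial SCCs, together with the decomposition $M_0=D$, $M_i=\MM(\PP_i \cup M_{i-1})$, $M_n=\MM(\PP \cup D)$. Fixing an arbitrary finite $D$, I would show by induction on $i$ that every $M_i$ is finite; the base case $M_0=D$ is immediate, and $M_n=\MM(\PP\cup D)$ finite gives termination. In the inductive step, assume $M_{i-1}$ finite. If $\PP_i$ is a non-trivial SCC, then by hypothesis it is terminating, so feeding it the finite fact set $M_{i-1}$ yields $M_i=\MM(\PP_i \cup M_{i-1})$ finite directly from the definition of terminating.

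The main obstacle is the remaining case, where $\PP_i$ is a set of trivial SCCs, since here the hypothesis says nothing and termination must be argued from acyclicity. I would first observe that the restriction of $\unig{\PP}$ to the rules of $\PP_i$ is acyclic, because any cycle would force its rules into a common non-trivial SCC, contradicting that each is its own trivial SCC. Taking a topological order $r^{(1)},\dots,r^{(m)}$ of these rules, so that every edge $\<r^{(l')},r^{(l)}\>$ satisfies $l'<l$ (strictness following from the absence of self-loops in trivial SCCs), I would show that applying the rules once each in this order reaches the fixpoint: a head produced by $r^{(l)}$ can match a body atom of $r^{(l')}$ only when $\<r^{(l)},r^{(l')}\>$ is an edge, forcing $l<l'$, so no already-processed rule is ever re-triggered, and every body atom of $r^{(l)}$ is supplied either by $M_{i-1}$ or by heads of earlier rules already available. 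Since $M_{i-1}$ is finite and one application of a single rule maps a finite interpretation to a finite one, every intermediate interpretation—and hence $M_i$—is finite. This closes the induction and completes the proof.
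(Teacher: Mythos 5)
Your proof is correct and follows essentially the same route as the paper: both directions rest on the stratification $\PP_1,\dots,\PP_n$ of $\unig{\PP}$ into non-trivial SCCs and sets of trivial SCCs, together with the decomposition $M_0=D$, $M_i=\MM(\PP_i\cup M_{i-1})$, the paper phrasing the converse as a proof by contradiction where you phrase it as a direct induction on strata. The only substantive difference is that you explicitly justify what the paper leaves implicit---namely that a stratum consisting of trivial SCCs always terminates, via acyclicity of the restricted firing graph and a topological-order evaluation---which fills in a gap in the paper's argument rather than taking a different approach.
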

\begin{proof}
($\Rightarrow$)
Clearly, if there is an SCC which is not terminating, then $\PP$ is not terminating. 

\noindent
($\Leftarrow$)
Assume now that $\PP$ does not terminate and all its non-trivial SCCs terminates. 
This means that there is a set of facts $D$ such that 
the fixpoint of $\PP \cup D$ is not finite. 
Since $\PP \cup D$ can be partitioned into $(\PP_1,\dots,\PP_n)$, there must be a non-trivial (i.e. recursive) SCC $\PP_i$
such that $\PP_i \cup M_{i-1}$ does not terminate.
This contradicts the hypothesis that all non-trivial SCCs terminate.
Indeed if $P_i$ terminates, then for every set of facts $D'$ including the facts
in $M_{i-1}$, the fixpoint of $\PP_i \cup D'$ terminates and, therefore,
the fixpoint of $\PP_i \cup M_{i-1}$ terminates as well.\hfill
\end{proof}

We now refine the previous lemma by showing that to see if a program $\PP$ is terminating it is not necessary to analyze every non-trivial SCC entirely, but we can focus on its relevant rules.
Henceforth, for every set of rules $\calc$, we use $Rel(\calc)$ to denote the set of relevant rules of $\calc$.

\begin{lemma}\label{lem:relevant-termination}
Let $\PP$ be a program and let $\calc$ be an SCC of $\unig{\PP}$. 
Then, $\calc$ is terminating iff $Rel(\calc)$ is terminating.
\end{lemma}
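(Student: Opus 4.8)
The plan is to prove the two directions separately, with the forward implication being essentially immediate and the reverse one carrying all the content. For the forward direction I would simply use monotonicity: since $Rel(\calc)\subseteq\calc$ and the programs are positive, $\lfp{\PP_1}(I)\subseteq\lfp{\PP_2}(I)$ for every interpretation $I$ whenever $\PP_1\subseteq\PP_2$, whence $\MM(Rel(\calc)\cup D)\subseteq\MM(\calc\cup D)$ for every finite set of facts $D$. Thus if $\calc$ is terminating, then for each finite $D$ the model $\MM(Rel(\calc)\cup D)$ is a subset of the finite set $\MM(\calc\cup D)$, hence finite, so $Rel(\calc)$ is terminating.

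For the reverse direction, the core is a structural observation about the non-recursive body atoms of the rules of $\calc$. Let $r\in\calc$ and let $B\in body(r)\setminus\rbody(r)$, so $B$ is \emph{not} mutually recursive with $head(r)$. I would first show that, during the bottom-up evaluation of $\calc\cup D$, every ground atom matching $B$ must belong to $D$ and can never be one of the atoms produced by the rules of $\calc$. Indeed, if a produced ground atom $g$ matched $B$, then $g$ would be an instance of $head(r')$ for some $r'\in\calc$, so $head(r')$ and $B$ would share the common instance $g$ and therefore unify; as $r$ and $r'$ lie in the same SCC $\calc$, Definition~\ref{def:mutually-recursive-atom} would make $B$ mutually recursive with $head(r)$, contradicting $B\notin\rbody(r)$. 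Hence the non-recursive body atoms of every rule of $\calc$ are matched only against the fixed finite set $D$ throughout the entire computation.

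I would then use this to confine the contribution of the non-relevant rules. Let $N=\calc\setminus Rel(\calc)$. By definition, each non-fact rule $r\in N$ has all \lvars\ of $head(r)$ occurring in $body(r)\setminus\rbody(r)$, and facts are ground; so any head atom produced by a rule of $N$ is obtained through a substitution whose values on the head \lvars\ are already fully determined by a matching of the non-recursive body atoms, which by the previous paragraph ranges only over $D$. Consequently the set $F(D)$ of all ground head atoms producible by $N$ is finite and fixed in advance, independently of the rest of the computation. Treating $F(D)$ as additional facts, every derivation in $\calc\cup D$ can be replayed in $Rel(\calc)\cup(D\cup F(D))$: each application of an $N$-rule is replaced by the already-present atom of $F(D)$, while applications of relevant rules are reused verbatim. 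This yields $\MM(\calc\cup D)\subseteq\MM(Rel(\calc)\cup(D\cup F(D)))$, and since $D\cup F(D)$ is finite and $Rel(\calc)$ is terminating by hypothesis, the right-hand side is finite, hence so is $\MM(\calc\cup D)$; thus $\calc$ is terminating.

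The main obstacle is precisely the structural claim of the second paragraph: pinning down that a non-mutually-recursive body atom can only ever be satisfied by the input facts and never by atoms the SCC itself generates. Once that is established, the finiteness of $F(D)$ and the replay argument are routine. I would also dispose of the degenerate cases separately to cover an arbitrary SCC rather than only the non-trivial ones: if $\calc$ is trivial or contains a fact, then by range-restrictedness such rules are non-relevant, $Rel(\calc)$ reduces accordingly, and both $\calc$ and $Rel(\calc)$ terminate by the same $F(D)$ bound.
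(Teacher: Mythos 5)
Your proof is correct and takes essentially the same approach as the paper's: the paper's one-sentence argument rests precisely on your key observation, namely that non-relevant rules can derive only finitely many ground atoms because all their head variables occur in body atoms that are not mutually recursive with the head, and such atoms can only ever be matched against the finite input. You simply make explicit what the paper leaves implicit---the unification/SCC argument showing those body atoms are never matched by atoms produced inside $\calc$, the finite set $F(D)$, the replay into $Rel(\calc) \cup (D \cup F(D))$, and the trivial monotonicity direction.
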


\begin{proof}
It follows from the fact that we can derive only a finite number of ground atoms using the rules in $ground(\calc) \setminus ground(Rel(\calc))$ starting from a finite set of facts---recall that, by definition, every non-relevant rule has a set of atoms in the body that are not mutually recursive with the head and contain all variables in the head.~\hfill
\end{proof}

To show the correctness of our approach, we first show that every \lb\ program can be rewritten into an ``equivalent'' program belonging to a simpler class of programs, called \emph{\sb}.
Then, we prove that \sb\ programs are terminating and this entails that \lb\ programs are terminating as well.

\begin{definition}[Program expansion]\label{def:expansion}
Let $\PP$ be a program and let $\omega = \{ \overline{\omega}_{p_1},\dots,\overline{\omega}_{p_n}\}$ be a set of vectors
such that $\overline{\omega}_{p_i} \in \mathbb{N}^{\ar(p_i)}$ and $p_i \in pred(\PP)$ for $1 \leq i \leq n$.
For any atom $A = p(t_1,\dots,t_m)$ occurring in $\PP$, we define $\ex{A}{\omega}=A$, if $p \not \in pred(\PP)$, otherwise
$\ex{A}{\omega}=p(\overline{t}_1,\dots,\overline{t}_m)$, where each
$\overline{t}_j$ is the sequence $t_j,\dots,t_j$ of length $\omega_p[j]$. Finally, 
$\PP^\omega$ denotes the program derived from $\PP$ by replacing every atom $A$ with $\ex{A}{\omega}$.
\hfill $\Box$
\end{definition}

Intuitively, the expansion of a program is obtained from the original program by increasing the arity of each predicate symbol according to $\omega$.
Below is an example.

\begin{example}\label{ex:program-expansion}
Consider program $P_{\ref{ex:simple_locally_bounded}}$ of Example~\ref{ex:simple_locally_bounded}
and the set of vectors $\omega = \{ \overline{\omega}_{\tt s}, \overline{\omega}_{\tt q}\}$ where 
$\overline{\omega}_{\tt s} = (2,3)$ and $\overline{\omega}_{\tt q} = (2,1)$.
The program $\ex{P_{\ref{ex:simple_locally_bounded}}}{\omega}$ is as follows:
\[
  \begin{array}{lll}
\hspace*{10mm}   r_1:\tt\ s(f(X),f(X),Y,Y,Y) & \leftarrow & \tt q(X,X,f(Y)),\ \ s(Z,Z,f(Y),f(Y),f(Y)).\\
\hspace*{10mm}   r_2:\tt\ q(f(U),f(U),V)     & \leftarrow & \tt s(U,U,f(V),f(V),f(V)). \hspace*{30mm} \Box
  \end{array}	
 \]
\end{example}

We now show that for every program $\PP$ and every set of vectors $\omega$, $\PP$ is terminating iff $\ex{\PP}{\omega}$ is terminating.
In the following, for every program $\PP$, we define $\omega(\PP)=\{\ \{\overline{\omega}_{p_1},\dots,\overline{\omega}_{p_n}\}\mid p_i \in pred(\PP) \wedge \overline{\omega}_{p_i} \in \mathbb{N}^{\ar(p_i)}\}$.

\begin{lemma}\label{lem:expansion-termination}
For every program $\PP$ and every $\omega \in \omega(\PP)$, $\PP$ is terminating iff $\ex{\PP}{\omega}$ is terminating.
\end{lemma}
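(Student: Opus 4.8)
The plan is to show that the bottom-up evaluations of $\PP \cup D$ and of $\ex{\PP}{\omega} \cup \ex{D}{\omega}$ proceed in lock-step, so that their minimal models are in a finiteness-preserving bijection; termination of one program then transfers to the other. First I would record the basic properties of the expansion map $\ex{\cdot}{\omega}$ acting on ground atoms. It is the identity on atoms over predicates not in $pred(\PP)$ (the input predicates). On atoms over defined predicates it is injective and admits a left inverse $contract$, which replaces each block of $\overline{\omega}_p[j]$ identical copies by a single argument, so that $contract(\ex{A}{\omega}) = A$. Crucially, it commutes with substitutions, $\ex{(A\theta)}{\omega} = \ex{A}{\omega}\theta$, because expansion merely duplicates argument positions while a substitution acts uniformly on all copies. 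I extend $\ex{\cdot}{\omega}$ pointwise to interpretations.

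The core step is to prove, by induction on $i$, that $\ex{\cdot}{\omega}$ restricts to a bijection between the defined-predicate atoms of $\lfp{\PP \cup D}^i(\emptyset)$ and those of $\lfp{\ex{\PP}{\omega} \cup \ex{D}{\omega}}^i(\emptyset)$, while the input-predicate atoms of the two coincide. The inductive step rests on the observation that a rule $r \in \PP$ fires under a grounding $\theta$ over an interpretation $I$ precisely when its image $\ex{r}{\omega}$, obtained by expanding every atom of $r$, fires under the same $\theta$ over $\ex{I}{\omega}$: by commutation each body atom satisfies $B\theta \in I$ iff $\ex{(B\theta)}{\omega} = \ex{B}{\omega}\theta \in \ex{I}{\omega}$ (injectivity gives the reverse implication), and the derived heads correspond via $\ex{(head(r)\theta)}{\omega} = \ex{head(r)}{\omega}\theta$. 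Since every head of $\ex{\PP}{\omega}$ and every fact of $\ex{D}{\omega}$ already has the duplicated shape, every derived defined-predicate atom lies in the image of $\ex{\cdot}{\omega}$, and since input predicates occur in no head, their atoms are exactly the input facts of $D$ in both programs. Passing to the limit yields a bijection between $\MM(\PP \cup D)$ and $\MM(\ex{\PP}{\omega} \cup \ex{D}{\omega})$ whose input part is a fixed finite set, so one model is finite iff the other is. This settles the $(\Leftarrow)$ direction at once: given any finite fact set $D$ for $\PP$, apply the correspondence to $\ex{D}{\omega}$.

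For $(\Rightarrow)$ the difficulty is that an arbitrary finite fact set $D'$ for $\ex{\PP}{\omega}$ need not be an expansion, since a fact over a defined predicate may carry distinct terms inside some duplication block. The key---and the step I expect to be the main obstacle---is to argue that such \emph{non-duplicated} facts are inert: every body occurrence of a defined predicate in $\ex{\PP}{\omega}$ is an expanded atom whose $j$-th block consists of $\overline{\omega}_p[j]$ syntactically identical terms, so a ground fact can unify with it only if its own entries agree within each block. Hence non-duplicated facts match no rule body and can trigger no derivation; they merely add a finite number of inert atoms to the model. I would therefore partition $D'$ into its input facts together with its duplicated defined-predicate facts---this part is exactly $\ex{D}{\omega}$ for the finite $\PP$-fact set $D$ consisting of those input facts together with the $contract$-images of the duplicated facts---and its finite non-duplicated remainder. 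Then $\MM(\ex{\PP}{\omega} \cup D')$ is the union of $\MM(\ex{\PP}{\omega} \cup \ex{D}{\omega})$ with a finite set, and the former is finite by the bijection and the assumption that $\PP$, hence $\MM(\PP \cup D)$, is terminating. This inertness argument is precisely what extends the equivalence from expansions to all fact sets over the enlarged signature, and thus completes the proof.
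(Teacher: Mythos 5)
Your proof is correct and follows essentially the same route as the paper's: the paper's (much terser) argument likewise transfers (non)termination between $\PP$ and $\ex{\PP}{\omega}$ by constructing, from any instance witnessing divergence of one program, a corresponding instance for the other, exactly the correspondence your lock-step bijection makes precise. Your inertness argument for non-duplicated facts spells out the one step the paper's sketch leaves implicit, namely how an arbitrary fact set $D^\omega$ for $\ex{\PP}{\omega}$ (which need not be an expansion) is reduced to a fact set $D$ for $\PP$.
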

\begin{proof}
For every atom $A^\omega$ occurring in $\ex{\PP}{\omega}$ let $A$ be the corresponding atom in $\PP$.
The claim follows from the observation that whenever there is a instance $D$ such that 
$\lfp{\PP \cup D}^\infty(\emptyset)$ is infinite, it is always possible to construct the instance 
$\ex{D}{\omega}$ which guarantees that $\lfp{\ex{\PP}{\omega} \cup \ex{D}{\omega}}^\infty(\emptyset)$ is infinite
as well. \\
Conversely, for every instance $D^\omega$ of $\ex{\PP}{\omega}$, 
if $\lfp{\ex{\PP}{\omega} \cup D^\omega}^\infty(\emptyset)$
is infinite, then we can always construct the instance $D$ guaranteeing that $\lfp{\PP \cup D}^\infty(\emptyset)$ 
is infinite as well. \hfill 
\end{proof}

We now introduce the class of \sb\ programs and show that such programs are terminating.
To this aim, we define the \emph{\tsize} of an atom $A=p(t_1,\dots,t_n)$ as $\ts(A)=\sum\limits_{i=1}^n size(t_i)$.

\begin{definition}[\Sb\ program]
A program $\PP$ is said to be \emph{\sb} if for every rule $r \in \PP$ which is not a fact,
there is an atom $B$ in $\sbody(r)$ such that $\ts(B) \ge \ts(head(r))$ for every
non-negative value of the \nvars\ occurring in $\ts(B)$ and $\ts(head(r))$.
\end{definition}

\begin{theorem}\label{the:size-termination}
Every \sb\ program is terminating.
\end{theorem}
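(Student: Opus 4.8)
Every \sb\ program is terminating.

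The plan is to show that the \ts\ measure gives a bound on the total size of every atom derivable by bottom-up evaluation, so that only finitely many ground atoms can ever be produced from a finite set of facts. I would argue contrapositively within a single SCC: by Lemma~\ref{lem:scc-termination} it suffices to show that every non-trivial SCC of $\unig{\PP}$ terminates, and by Lemma~\ref{lem:relevant-termination} I may restrict attention to the relevant rules of that SCC. The key quantity to track is the ground total size: for a ground atom $A=p(t_1,\dots,t_n)$, define its value to be $\sum_{i=1}^n size(t_i)$, where on ground terms the $size$ recursion of Definition~\ref{def:size} evaluates to an ordinary natural number (every \nvar\ disappears since there are no \lvars). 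The whole point of the \sb\ condition is that this value cannot grow along a rule application.

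First I would establish the propagation bound. Suppose a ground atom $A'=head(r)\theta$ is derived by firing a ground instance $r\theta$ of a relevant rule $r$, where the ground body atoms $B_1\theta,\dots,B_\ell\theta$ are already present. The defining inequality of a \sb\ program guarantees there is an atom $B\in\sbody(r)$ with $\ts(B)\ge\ts(head(r))$ as a polynomial inequality over the \nvars. The crucial syntactic fact I would invoke is that $B\in\sbody(r)$ means $B$ contains \emph{all} \lvars\ occurring in $head(r)$; hence when I instantiate both sides under $\theta$, every \lvar\ $X$ in $head(r)$ receives the size of its ground image $X\theta$, and this same value is also counted (at least once) in $\ts(B)\theta$. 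Because the inequality $\ts(B)\ge\ts(head(r))$ holds for \emph{all} non-negative assignments to the \nvars, it holds in particular at the assignment induced by $\theta$, giving the ground inequality
\[
\ts(B\theta)\ \ge\ \ts(head(r)\theta)\ =\ \ts(A').
\]
Thus the total size of the newly derived atom is bounded by the total size of a body atom already in the interpretation.

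From here termination is a finiteness argument. Starting from a finite set of facts $D$, let $N$ be the maximum total size occurring among the ground atoms of $D$. By induction on the number of $\lfp{}$ iterations, using the bound above at each rule firing, every ground atom ever produced has total size at most $N$ (a fact is its own base case; for a derived atom the bound transfers from the body atom witnessing the \sb\ inequality, which by induction already has total size at most $N$). Over the finitely many predicate and function symbols of $\PP\cup D$, there are only finitely many ground atoms of total size at most $N$, so $\lfp{\PP\cup D}^\infty(\emptyset)$ is finite; hence the fixpoint is reached in finitely many steps and $\PP$ is terminating.

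The main obstacle is the middle step, namely making rigorous that membership in $\sbody(r)$ is exactly the hypothesis needed to pass from the symbolic inequality $\ts(B)\ge\ts(head(r))$ over \nvars\ to the ground inequality $\ts(B\theta)\ge\ts(A')$. One must check that the substitution $\theta$ induces a single consistent non-negative assignment of the \nvars\ — assigning to each \nvar\ $x$ (for \lvar\ $X$) the ground value $size(X\theta)$ — and that under this assignment both $\ts$ expressions evaluate to the corresponding ground total sizes; the $\sbody$ condition (all head \lvars\ appear in $B$) is what ensures no head \lvar\ is left unconstrained by the body side, so the symbolic inequality really does specialize to the ground one. I expect the rest to be routine, but this specialization is where the argument must be stated carefully.
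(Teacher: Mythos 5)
Your argument is essentially the paper's own proof: specialize the symbolic inequality $\ts(B) \ge \ts(head(r))$ to each ground instance via the assignment $x \mapsto size(X\theta)$ (which is where membership in $\sbody(r)$ and universality over non-negative values are used, exactly as in the paper), then induct on the iterations of $\lfp{\PP \cup D}$ to show every derived atom has bounded \tsize, and conclude finiteness because only finitely many ground atoms over the symbols of $\PP \cup D$ have \tsize\ below that bound. Two minor points worth fixing: the opening reduction to SCCs and relevant rules via Lemmas~\ref{lem:scc-termination} and~\ref{lem:relevant-termination} is superfluous here (the \sb\ condition already applies to every non-fact rule, and the paper's proof uses no such reduction), and your bound $N$ must be taken as the maximum \tsize\ over the facts of $\PP \cup D$ rather than of $D$ alone, since a \sb\ program may itself contain facts whose heads would otherwise violate your invariant.
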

\begin{proof}
Let $\PP$ be a \sb\ program and $D$ a finite set of facts, we consider only rules in $\PP$ having a non-empty body.
Given an atom $A$ and a ground instance $A'$ of $A$, let $\theta$ be the mgu of $A$ and $A'$.
Notice that $\theta$ is of the form $\{X_1/t_1, \dots, X_n/t_n\}$ where the $X_i$'s are exactly the \lvars\ occurring in $A$ and all the $t_j$'s are ground terms.
It can be easily verified that $\ts(A')$ can be obtained from $\ts(A)$ by replacing every \nvar\ $x_i$ in $\ts(A)$ with $size(t_i)$.

We now show that for every ground rule $r' \in ground(\PP)$ there is an atom $B' \in body(r')$ such that $\ts(B') \ge \ts(head(r'))$.
Consider a rule $r$ in $\PP$ of the form $A \leftarrow B_1,\dots, B_k$ and a ground rule $r' \in ground(r)$ of the form $A' \leftarrow B_1',\dots, B_k'$.
Since $\PP$ is \sb, there exists an atom $B_j$ in $\sbody(r)$ such that $\ts(B_j) \ge \ts(A)$ for every non-negative value of the \nvars\ occurring in the inequality.
Notice every \lvar\ in $A$ appears also in $B_j$ by definition of $\sbody$.
Let $\{X_1/t_1, \dots, X_n/t_n\}$ be the mgu of $B_j$ and  $B_j'$.
As $\ts(B_j) \ge \ts(A)$ holds for all non-negative value of its \nvars, it also holds when every \nvar\ $x_i$ is replaced with $size(t_i)$, for $1 \leq i \leq n$.
Thus, $\ts(B_j') \ge \ts(A')$.

Let us denote $\lfp{\PP \cup D}^i(\emptyset)$ as $M_i$ for every $i\geq 1$ and
let $\ts_{max} = \max\{\ts(B) \mid  B \leftarrow \mbox{ is a fact in } \PP \cup D\}$.
We show that for every $i\geq 1$ and every ground atom $A$ in $M_i$ the following holds
$\ts_{max} \ge \ts(A)$.
The proof is by induction on $i$.

\noindent 
$\bullet$
\emph{Base case ($i$=$1$).} It follows from the fact that $M_1$=$\{B \mid  B\!\leftarrow\!\mbox{ is a fact in } \PP\,\cup\,D\}$.

\noindent 
$\bullet$
\emph{Inductive step ($i\rightarrow i+1$).} 
Let $r'$ be a ground rule in $ground(\PP)$ such that $body(r')\subseteq M_i$.
Then, as shown above, there is an atom $B$ in $body(r')$ such that $\ts(B) \ge \ts(head(r'))$.
By the induction hypothesis, $\ts_{max} \ge \ts(B)$ and thus $\ts_{max} \ge \ts(head(r'))$.

Thus, for every $i\geq 1$ and every ground atom $A$ in $M_i$, we have that $\ts(A)$ is bounded by $\ts_{max}$.
Since programs are range-restricted, atoms in $\cup_{i\geq 1}M_i$ are built from constants and function symbols appearing in $\PP\cup D$, which are finitely many.
These observations and the definition of $\ts$ imply that we can have only finitely many ground atoms in $\cup_{i\geq 1}M_i$.
Hence, $\PP$ is terminating.\hfill
\end{proof}

We are now ready to show the correctness of the \lb\ technique.

\begin{theorem}\label{the:lb-equivalence}
Every \lb\ program is terminating.
\end{theorem}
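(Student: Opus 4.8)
The plan is to realise the informal strategy announced before the theorem: turn a \lb\ program into a \sb\ one via the expansion of Definition~\ref{def:expansion}, and then invoke Theorem~\ref{the:size-termination}. By Lemma~\ref{lem:scc-termination} it suffices to prove that every non-trivial SCC $\calc$ of $\unig{\PP}$ is terminating, so I fix such a $\calc$ and work with it alone. Since $\PP$ is \lb, the SCC $\calc$ is \lb, so Definition~\ref{def:lb-program} provides vectors $\overline{\alpha}_{p}\in\mathbb{N}^{\ar(p)}$, one for each $p\in pred(\calc)$. I collect them into $\omega=\{\overline{\alpha}_p\mid p\in pred(\calc)\}\in\omega(\calc)$ and expand $\calc$ with it, so that by Lemma~\ref{lem:expansion-termination} it is enough to show that $\ex{\calc}{\omega}$ is terminating.

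The computational heart of the argument is the identity
\[
\ts(\ex{A}{\omega}) \;=\; \overline{\alpha}_{p}\product size(A)
\qquad\text{for every atom } A=p(t_1,\dots,t_m)\text{ with } p\in pred(\calc).
\]
Indeed, $\ex{A}{\omega}$ replaces each argument $t_j$ with $\overline{\alpha}_p[j]$ copies of itself, so its \tsize\ is $\sum_j \overline{\alpha}_p[j]\cdot size(t_j)$, which is exactly the scalar product on the right. Thus the quantity compared by the \lb\ inequality of Definition~\ref{def:lb-program} is precisely the \tsize\ of the corresponding expanded atom.

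Next I would observe that the expansion leaves the \ug\ unchanged: duplicating the arguments of two atoms with the same predicate does not affect whether they unify, so $\unig{\ex{\calc}{\omega}}$ has the same edges as $\unig{\calc}$. Hence $\ex{\calc}{\omega}$ is again a single non-trivial SCC, and the notions of mutual recursion, of $\sbody$, and therefore of relevance are preserved by the map $r\mapsto\ex{r}{\omega}$; in particular $Rel(\ex{\calc}{\omega})=\{\ex{r}{\omega}\mid r\in Rel(\calc)\}$. By Lemma~\ref{lem:relevant-termination} it then remains to show that this set of rules is \sb. Take $r\in Rel(\calc)$: being relevant it is not a fact and, by range-restriction, has a non-empty body, and Definition~\ref{def:lb-program} yields an atom $B\in\srbody(r)\subseteq\sbody(r)$, say with predicate $p_j$, such that $\overline{\alpha}_{p_j}\product size(B)-\overline{\alpha}_{p_i}\product size(A)\ge 0$ for all non-negative values of the \nvars, where $A=head(r)$ has predicate $p_i$. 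Both $p_i$ and $p_j$ lie in $pred(\calc)$ (the latter because $B$ is mutually recursive with the head, hence unifies with the head of some rule of $\calc$), so the identity above rewrites this inequality as $\ts(\ex{B}{\omega})\ge\ts(\ex{A}{\omega})=\ts(head(\ex{r}{\omega}))$, still for all non-negative \nvars, which are exactly the same before and after expansion. Since $\ex{B}{\omega}$ retains all \lvars\ of $\ex{A}{\omega}$ and hence lies in $\sbody(\ex{r}{\omega})$, the rule $\ex{r}{\omega}$ witnesses the \sb\ condition. Theorem~\ref{the:size-termination} then gives termination of $Rel(\ex{\calc}{\omega})$, and chaining back through Lemmas~\ref{lem:relevant-termination},~\ref{lem:expansion-termination},~and~\ref{lem:scc-termination} proves that $\PP$ is terminating.

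I expect the main obstacle to be precisely the bookkeeping that forces the \emph{order} ``expand first, drop non-relevant rules second''. The full expansion $\ex{\calc}{\omega}$ is not \sb\ in general (the expansions of non-relevant rules need not satisfy the size inequality), which is why Lemma~\ref{lem:relevant-termination} is indispensable; and the witnessing atom $B$ may have a predicate defined \emph{only} by non-relevant rules of $\calc$, so if one reduced to $Rel(\calc)$ before expanding, that predicate would become extensional and would be left unexpanded, breaking the size identity. Expanding the whole SCC first, where such a predicate still belongs to $pred(\calc)$, avoids this pitfall. The remaining care points---that expansion preserves unification, relevance, and the set of \nvars\ quantified over---are routine once the size identity is in place.
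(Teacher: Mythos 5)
Your proof is correct and is essentially the paper's own argument: both reduce to a non-trivial SCC via Lemma~\ref{lem:scc-termination}, use the \lb\ vectors as an expansion $\omega$, pass through Lemmas~\ref{lem:expansion-termination} and~\ref{lem:relevant-termination}, and finish with Theorem~\ref{the:size-termination} after observing that the \lb\ inequality for a relevant rule is exactly the \sb\ inequality for its expanded version (your identity $\ts(\ex{A}{\omega})=\overline{\alpha}_{p}\product size(A)$, which the paper leaves implicit). The only divergence is the order of the two reductions: the paper restricts to relevant rules first and expands second, asserting directly that $Rel(\calc)^\omega$ is \sb, whereas you expand $\calc$ first and drop non-relevant rules afterwards, verifying that expansion preserves unification, the \ug, and relevance, so that $Rel(\ex{\calc}{\omega})=\ex{Rel(\calc)}{\omega}$. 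Your closing claim that the paper's order hides a pitfall is a definitional rather than a mathematical issue: the paper's $\omega$ is drawn from $\omega(\calc)$, hence carries a vector for \emph{every} predicate in $pred(\calc)$, including one defined only by non-relevant rules; under the intended reading of Definition~\ref{def:expansion}, where an atom is expanded whenever $\omega$ supplies a vector for its predicate, the witnessing body atom you worry about is expanded in $Rel(\calc)^\omega$ just as in your construction, and only a strictly literal reading (expansion governed by $pred(Rel(\calc))$) would leave it untouched. So both orders go through; what your version buys is self-containedness, since it makes explicit the size identity, the preservation of the \ug\ under expansion, and the commutation of expansion with taking relevant rules, none of which the paper spells out.
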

\begin{proof}
Let $\PP$ be a \lb\ program and $\calc$ a non-trivial SCC of $\unig{\PP}$.
Since $\PP$ is \lb, then there exists $\omega \in \omega(\calc)$ which satisfies the condition of Definition~\ref{def:lb-program}, that is, $\calc$ is \lb.
This implies that $Rel(\calc)^\omega$ is \sb.
Thus, $Rel(\calc)^\omega$ is terminating by Theorem~\ref{the:size-termination}.
Lemma~\ref{lem:expansion-termination} implies that $Rel(\calc)$ is terminating and Lemma~\ref{lem:relevant-termination} in turn implies that $\calc$ is terminating.
Finally, by Lemma~\ref{lem:scc-termination}, we can conclude that $\PP$ is terminating.\hfill
\end{proof}

The class of \lb\ programs is incomparable with different termination criteria in the literature, including the most general ones.

\begin{theorem}\label{th:comparison-1}
\Lb\ programs are incomparable with argument-restricted, mapping-re-stricted, and bounded programs.
\end{theorem}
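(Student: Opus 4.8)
The plan is to prove incomparability by exhibiting, for each of the two directions and each of the three criteria, a witnessing program. Since the argument-restricted class is contained in both the mapping-restricted and the bounded class, a single carefully chosen program can settle several cases at once, so I would aim for one witness per direction rather than six separate ones.

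For the direction ``\lb\ is not contained in the others'', I would reuse a running example. By Example~\ref{ex:bubble_continued}, the bubble-sort program $\PP_{\ref{ex:bubble}}$ is \lb. As already observed in the discussion following Example~\ref{ex:bubble}, none of the argument-restricted, mapping-restricted, and bounded criteria recognizes it: all of them analyze argument positions (or bounded groups thereof) essentially \emph{individually}, and are defeated by the fact that the second argument of $\tt bub$ grows in rule $r_1$ even though the \emph{overall} size is preserved. This one program therefore witnesses $\Lb \not\subseteq X$ for all three criteria $X$ simultaneously.

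For the converse direction I would exploit the precise structural weakness of Definition~\ref{def:lb-program}: the bounding body atom must be picked from $\srbody(r)=\rbody(r)\cap\sbody(r)$, hence it must \emph{at the same time} be mutually recursive with the head and contain \emph{all} \lvars\ occurring in the head. A program whose head variables are scattered across \emph{distinct} recursive body atoms will therefore have $\srbody(r)=\emptyset$ on a \relevant\ rule and cannot be \lb. Concretely, I would take the single-rule program $r:\ \tt p(X,Y)\leftarrow p(X,Z),\,p(W,Y)$. Its \ug\ has the non-trivial SCC $\{r\}$; both body atoms unify with the head and thus lie in $\rbody(r)$, so $body(r)\setminus\rbody(r)$ carries no head variable and $r$ is \relevant\ by definition; yet neither body atom contains both $\tt X$ and $\tt Y$, whence $\sbody(r)=\emptyset$ and $\srbody(r)=\emptyset$, so the program fails to be \lb. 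Being function-free, it is argument-restricted under the trivial ranking, and since the argument-restricted class is included in the mapping-restricted and bounded classes, it belongs to those two as well. This witnesses $X\not\subseteq\Lb$ for all three $X$, and together with the previous paragraph gives incomparability with each criterion.

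I expect the converse direction to be the main obstacle. The difficulty is not the bookkeeping for the existing criteria (the function-free witness makes membership there immediate via the known inclusions), but isolating exactly the feature that makes \lb\ fail: the mismatch between ``an atom that carries \emph{all} head variables'' and ``an atom that carries \emph{enough} size'', which is encoded in the intersection $\rbody\cap\sbody$. The delicate point is verifying against Definitions~\ref{def:mutually-recursive-atom} and~\ref{def:lb-program} that the rule is genuinely \relevant\ (so it cannot simply be ignored) while $\srbody$ is empty (so no bounding atom can ever be selected); both must hold for the witness to defeat \lb.
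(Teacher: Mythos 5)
Your proof is correct and follows the same overall strategy as the paper's: one witness per direction, using the inclusion of argument-restricted programs in both the mapping-restricted and bounded classes so that a single program settles all three criteria at once, and with the identical first witness ($\PP_{\ref{ex:bubble}}$ is \lb\ by Example~\ref{ex:bubble_continued}, and its non-membership in the other classes is asserted at the same level of detail as in the paper's own proof). The only genuine divergence is the converse witness. The paper uses $\{\tt p(f(X)) \leftarrow q(X),\ q(Y) \leftarrow p(f(Y))\}$, where every \relevant\ rule has a nonempty $\srbody$ but the resulting inequalities force some component of $\overline{\alpha}_{p}$ to be non-positive, i.e., the program fails \lb\ for an \emph{arithmetic} reason; your function-free program $\tt p(X,Y) \leftarrow p(X,Z),\, p(W,Y)$ fails \lb\ for a \emph{structural} reason, since both body atoms are mutually recursive with the head (so the rule is \relevant\ and cannot be discarded) yet neither contains both head variables, giving $\srbody(r)=\rbody(r)\cap\sbody(r)=\emptyset$, so no bounding atom can even be selected. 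Your verification of relevance and of $\srbody(r)=\emptyset$ against Definitions~\ref{def:mutually-recursive-atom} and~\ref{def:lb-program} is accurate, and function-freeness makes membership in the argument-restricted class immediate (trivial ranking), hence membership in the other two classes via the known inclusions. Both witnesses are equally valid; yours isolates the requirement that a single recursive body atom carry \emph{all} head variables as a source of incompleteness of the \lb\ criterion, while the paper's example shows the criterion can also fail on programs where that requirement is met but term sizes grow along a cycle.
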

\begin{proof}
Recall that both bounded and mapping-restricted programs include argument-restricted programs. 
To prove the claim we show that 
\emph{(i)} there is a program which is \lb\ but is neither mapping-restricted nor bounded, and 
\emph{(ii)} there is a program which is argument-restricted but not \lb. 
\emph{(i)} As already shown, program $\PP_{\ref{ex:bubble}}$ of Example~\ref{ex:bubble} is \lb; however, it can be easily verified that $\PP_{\ref{ex:bubble}}$ is neither mapping-restricted nor bounded.
\emph{(ii)} Consider the program consisting of the rules $\tt p(f(X)) \leftarrow q(X)$ and $\tt q(Y) \leftarrow p(f(Y))$. 
This program is argument-restricted (and thus also mapping-restricted and bounded) but is not \lb.~\hfill
\end{proof}

Regarding the termination criteria mentioned in Theorem~\ref{th:comparison-1},
we recall that mapping restriction $(MR)$ and bounded programs $(BP)$ are incomparable and both extend argument restriction $(AR)$.
Concerning the computational complexity, while $AR$ is polynomial time, both $MR$ and $BP$ are exponential. 
As a remark, it is interesting to note that the above result highlights the fact that our technique 
analyzes logic programs from a radically different point of view \wrt previously defined approaches, which analyze 
how complex terms are propagated among arguments.

\section{\GB\ Programs}\label{sec:mutual_recursion}

As saw in the previous section, to determine if a program is \lb\ we check through linear constraints if the size of the head atom is bounded by the size of a body atom for every relevant rule in a non-trivial SCC of the \ug\ (cf. Definition~\ref{def:lb-program}).
Looking at each rule individually has its limitations, as shown by the following example.

\begin{example}\label{ex:globally-bounded}
Consider the following simple program $\PP_{\ref{ex:globally-bounded}}$:
\[
\begin{array}{lll}
r_1:\ \tt p(X,Y) & \leftarrow & \tt q(f(X),Y).\\
r_2:\ \tt q(W,f(Z)) & \leftarrow & \tt p(W,Z). 
\end{array}
\]
It is easy to see that the program above is terminating, but it is not \lb.
The linear inequalities for the program are (cf. Definition~\ref{def:lb-program}):
$$
\begin{cases}
(\overline{\alpha}_{q}[1] -\overline{\alpha}_{p}[1]) \product x + (\overline{\alpha}_{q}[2] - \overline{\alpha}_{p}[2]) \product y + \overline{\alpha}_{q}[1] \ge 0\\
(\overline{\alpha}_{p}[1] -\overline{\alpha}_{q}[1]) \product w + (\overline{\alpha}_{p}[2] - \overline{\alpha}_{q}[2]) \product z -\overline{\alpha}_{q}[2] \ge 0
\end{cases}
$$
\noindent
It can be easily verified that there are no positive integer values for 
$\overline{\alpha}_{p}[1],$ $\overline{\alpha}_{p}[2],$ 
$\overline{\alpha}_{q}[1],$ $\overline{\alpha}_{q}[2]$ 
such that the inequalities hold for all $x,y,w,z \in \mathbb{N}_0$.
The reason is the presence of the expression $-\overline{\alpha}_{q}[2]$ in the second inequality. 
Intuitively, this is because the size of the head atom increases \wrt the size of the body atom in $r_2$.
However, notice that the cycle involving $r_1$ and $r_2$ does not increase the overall size of propagated terms.
This suggests we can check if an \emph{entire cycle} (rather than each individual rule) propagates terms of bounded size.~\hfill$\Box$
\end{example}

To deal with programs like the one shown in the previous example, we introduce the class of \emph{\gb\ programs}, which is able to perform an analysis of how terms propagate through a \emph{group} of rules, rather than looking at rules \emph{individually} as done by the \lb\ criterion. 

Given a program $\PP$, a cyclic path $\pi$ of $\unig{\PP}$ is a sequence of edges 
$\<r_1,r_2\>,\<r_2,r_3\>,$ $\dots,\<r_n,r_1\>$. 
Moreover a cyclic path $\pi$ is \emph{basic} if every edge $\pi$ does not occur more than once.
We say that $\pi$ is \emph{relevant} if every $r_i$ is \relevant, for $1 \leq i \leq n$.

In the following, we first present the \gb\ criterion for linear programs and then show how it can be applied to non-linear ones.

\paragraph{\bf Dealing with linear programs.}
A program $\PP$ is \emph{linear} if every rule in $\PP$ is linear. A rule $r$ is \emph{linear} if $|\rbody(r)|\leq 1$.
Notice that $\rbody(r)$ contains exactly one atom $B$ for every linear rule $r$ in a non-trivial SCC of the \ug; thus, with a slight abuse of notation, we use $\rbody(r)$ to refer to $B$.

\begin{definition}[Cycle constraints]\label{def:linear-gb}
Let $\PP$ be a linear program and let $\pi = \<r_1,r_2\>,\dots,\<r_n,r_1\>$ be a basic cyclic path of $\unig{\PP}$.
For every mgu $\theta_i$ of $head(r_i)$ and $\rbody(r_{i+1})$ ($1 \le i < n$)\footnote{Note that such $\theta_i$'s always exist by definition of \ug.},
we define the set of (linear) equalities $eq(\theta_i)=\{x=size(t)\mid  X/t \in \theta_i\}$.
Then, we define $eq(\pi)=\bigcup\limits_{1 \le i < n} eq(\theta_i)$.
~\hfill$\Box$
\end{definition}

\begin{example}
Consider the program $\PP_{\ref{ex:globally-bounded}}$ and the two basic cyclic paths $\pi_1 = \<r_1,r_2\>\,\<r_2,r_1\>$ and
$\pi_2 = \<r_2,r_1\>\,\<r_1,r_2\>$ of $\unig{\PP_{\ref{ex:globally-bounded}}}$. The mgu of $head(r_1)$ and $rbody(r_2)$ is $\theta=\{\tt X/W,\ Y/Z\}$ and thus 
$eq(\pi_1)=\{x=w,\ y=z\}$. Furthermore, the mgu of $head(r_2)$ and $rbody(r_1)$ is $\theta=\{\tt W/f(X),\ Y/f(Z)\}$ and thus
$eq(\pi_2)=\{w=1+x,\ y=1+z\}$.~\hfill$\Box$
\end{example}

\begin{definition}[Linear \gb\ programs]\label{def:linear-gb}
Let $\PP$ be a linear program, $\pi=\<r_1,r_2\>\,\dots\,\<r_n,r_1\>$ be a basic cyclic path of $\unig{\PP}$ and $p$ be the predicate defined by $r_n$.
We say that $\pi$ is \emph{\gb} if $eq(\pi)$ is satisfiable for some non-negative value of its \nvars\ and there exists a vector $\overline{\alpha}_p \in \mathbb{N}^{\ar(p)}$ such that the constraint
$$
\overline{\alpha}_{p} \product size(\rbody(r_1)) - \overline{\alpha}_{p} \product size(head(r_n)) \ge 0
$$
is satisfied for every non-negative value of its \nvars\ that satisfy $eq(\pi)$.
We say that $\PP$ is \emph{\gb} if every relevant basic cyclic path of $\unig{\PP}$ is \gb.~\hfill$\Box$
\end{definition}

In the definition above, we assume that distinct basic cyclic paths do not share any \lvar .

\begin{example}\label{ex:globally-bounded-contd}
Consider again program~$\PP_{\ref{ex:globally-bounded}}$ of Example~\ref{ex:globally-bounded}.
The program is clearly linear and $\unig{\PP_{\ref{ex:globally-bounded}}}$ has only two relevant basic cyclic paths $\pi_1=\<r_1,r_2\>\<r_2,r_1\>$ and $\pi_2=\<r_2,r_1\>\<r_1,r_2\>$.
To check if $\PP_{\ref{ex:globally-bounded}}$ is \gb, we need to check if $eq(\pi_1)=\{ x_1=w_1,\ y_1=z_1\}$ and 
$eq(\pi_2)=\{w_2=1+x_2,\ y_2=1+z_2\}$ admit a solution
and if there exist $\overline{\alpha}_{p},\overline{\alpha}_q \in\mathbb{N}^2$ s.t. the constraints:
\[
\begin{array}{llll}
 \overline{\alpha}_{q} \product (x_1 + 1, y_1) & - & \overline{\alpha}_{q} \product (w_1,z_1+1) & \geq 0, \\
 \overline{\alpha}_{p} \product (w_2,z_2) & - & \overline{\alpha}_{p} \product (x_2,y_2) & \ge 0
 \end{array}
\]
are satisfied for all $x_1,y_1,w_1,z_1 \in\mathbb{N}_0$ and all $x_2,y_2,w_2,z_2 \in\mathbb{N}_0$ that satisfy $eq(\pi_1)$ and $eq(\pi_2)$.

By applying the equality conditions $eq(\pi_1)$ and $eq(\pi_2)$ to the above constraints
we get the below inequalities for the basic cyclic paths $\pi_1$ and $\pi_2$:
\[
\begin{array}{llll}
 (\overline{\alpha}_{q}[1], \overline{\alpha}_{q}[2])  \product (x_1 + 1, z_1) & - &  
 (\overline{\alpha}_{q}[1], \overline{\alpha}_{q}[2])) \product (x_1,z_1+1) & \geq 0, \\
 (\overline{\alpha}_{p}[1], \overline{\alpha}_{p}[2])) \product (x_2+1,z_2)   & - &
 (\overline{\alpha}_{p}[1], \overline{\alpha}_{p}[2])) \product (x_2,z_2+1) & \ge 0
 \end{array}
\]

It is easy to see that the first constraint (resp. the second) is satisfied for every vector
$\overline{\alpha}_p\in\mathbb{N}^2$ (resp. $\overline{\alpha}_q \in \mathbb{N}^2$) such that
$\overline{\alpha}_p[1] \ge \overline{\alpha}_p[2]$ (resp. $\overline{\alpha}_q[1] \ge \overline{\alpha}_q[2]$). Thus, $\PP_{\ref{ex:globally-bounded}}$ is \gb .
~\hfill$\Box$
\end{example}

To prove the correctness of our approach, we introduce a simpler class of terminating programs, as we did in the case of \lb\ programs.

\begin{definition}[Linear \csb\ programs]\label{def:linear-size-gb}
Let $\PP$ be a linear program.
We say that $\PP$ is \emph{\csb} if for every relevant basic cyclic path $\pi=\<r_1,r_2\>\,\dots\,\<r_n,r_1\>$ of $\unig{\PP}$, $eq(\pi)$ is satisfiable for some non-negative value of its \nvars\ and the constraint
$$
\ts(\rbody(r_1)) - \ts(head(r_n)) \ge 0
$$
is satisfied for every non-negative value of its \nvars\ that satisfy $eq(\pi)$.~\hfill$\Box$
\end{definition}

\begin{theorem}\label{the:size-cycle-bounded-termination}
Every linear \csb\ program is terminating.
\end{theorem}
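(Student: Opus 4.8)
The plan is to mirror the development used for the \lb\ case: isolate a manifestly terminating subclass (here the linear \csb\ programs \emph{are} that subclass, playing the role that \sb\ programs played before) and bound the \ts\ of every atom produced along a derivation, so that finiteness follows from range-restrictedness exactly as at the end of the proof of Theorem~\ref{the:size-termination}. The skeleton is: (i) reduce to the relevant rules of a single non-trivial SCC; (ii) lift the symbolic cycle constraint of Definition~\ref{def:linear-size-gb} to the ground level; and (iii) use it to bound the total size of every derived atom.

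First I would reduce the problem. By Lemma~\ref{lem:scc-termination} it suffices to show that every non-trivial SCC $\calc$ of $\unig{\PP}$ is terminating, and by Lemma~\ref{lem:relevant-termination} it suffices to show $Rel(\calc)$ is terminating; fix a finite set of facts $D$ and let $M_0=\max\{\ts(B)\mid B\leftarrow\mbox{ is a fact in }D\}$. The crucial structural observation is that, since $\PP$ is linear, each relevant rule of $\calc$ has a single recursive body atom $\rbody(r)$, whereas every other (non-recursive) body atom, by definition of mutual recursion, unifies with no head of $\calc$ and can therefore be matched only by facts of $D$; its contribution to sizes is bounded by $M_0$. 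Consequently the derivation of any atom over $pred(\calc)$ is a \emph{spine}: a chain of ground firings producing atoms $C_0,C_1,\dots,C_\ell$ whose recursive inputs trace a walk $r_{i_1},\dots,r_{i_\ell}$ in $\unig{\calc}$, where $C_j$ is the head produced at step $j$ and matches $\rbody(r_{i_{j+1}})$.

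Next I would establish the \emph{ground cycle constraint}: for any contiguous segment of a spine realizing a relevant basic cyclic path $\pi=\<r_1,r_2\>\dots\<r_n,r_1\>$, producing a ground instance of $head(r_n)$ from a ground instance of $\rbody(r_1)$ via a composed substitution $\theta$, we have $\ts(\rbody(r_1)\theta)\ge\ts(head(r_n)\theta)$. This follows from Definition~\ref{def:linear-size-gb} by the same substitution argument as in Theorem~\ref{the:size-termination}: the unifications realized along the segment induce an assignment of the \nvars\ satisfying $eq(\pi)$, and the symbolic inequality $\ts(\rbody(r_1))-\ts(head(r_n))\ge 0$, which holds for \emph{all} non-negative values satisfying $eq(\pi)$, therefore holds for this ground assignment. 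In words: traversing any basic cycle never increases the total size of the propagated atom.

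Finally I would bound $\ts(C_j)$ for \emph{every} $j$ by a constant depending only on $\calc$ and $M_0$, and then conclude as in Theorem~\ref{the:size-termination} that only finitely many ground atoms can occur. This step is the main obstacle. The ground cycle constraint controls only the \emph{net} effect of a full cycle, and naive cycle-cutting fails, because deleting a (non-increasing) cycle feeds a \emph{larger} atom into the continuation while rule application is not monotone in its recursive input. To get around this I would first show that \csb\ forbids size-increasing \emph{duplication}: if a relevant rule $s$ copied a recursive variable into its head with higher multiplicity, then the basic cycle obtained by rotating a relevant cycle through $s$ so that $s=r_n$ would violate its constraint, since the over-represented variable, traced back through $eq(\pi)$, is not dominated by $\rbody(r_1)$. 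Hence each firing increases the total size by at most an additive constant $d$ (absorbing the $M_0$-bounded side contributions). Combining this additive per-step bound with cycle non-increase, through a first-return decomposition of the spine walk in the finite graph $\unig{\calc}$, confines any transient ``overshoot'' to simple—hence bounded-length—segments, yielding $\ts(C_j)\le M_0+K$ for a constant $K$. The delicate point to nail down is precisely this interaction of the two bounds across nested cycles; once it is in place, the remaining argument is routine.
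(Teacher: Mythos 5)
Your steps (i) and (ii) are sound, and in fact they constitute essentially the \emph{entire} proof given in the paper: the paper chains ground rules $r_1',\dots,r_n'$ along a relevant basic cyclic path, observes that the induced ground substitutions satisfy $eq(\pi)$, concludes $\ts(\rbody(r_1'))\ge \ts(head(r_n'))$, and then finishes in a single sentence asserting that one therefore ``cannot derive atoms of increasing size through the cyclic application of rules,'' hence termination. None of the machinery of your step (iii) has a counterpart there. So the difficulty you isolate---that cycle non-increase controls only the net effect of completed cycles, while firing is not monotone in the size of its recursive input---is genuine and is not resolved by the paper either; you are right that something must be said at this point.

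The gap is that your repair of this step rests on a false lemma. The claim that \csb\ forbids a relevant rule from copying a recursive variable into its head with higher multiplicity is refuted by the linear program consisting of $s:\ \tt q(X,X) \leftarrow p(X,Y)$ and $t:\ \tt p(c,Z) \leftarrow q(Z,W)$. Its \ug\ has exactly the two edges $\<s,t\>$ and $\<t,s\>$, both rules are relevant, and the only relevant basic cyclic paths are the two rotations of the unique cycle. For $\pi_1=\<s,t\>\,\<t,s\>$ we get $eq(\pi_1)=\{z=x,\ w=x\}$ and the constraint $(x+y)-(0+z)=y\ge 0$; for $\pi_2=\<t,s\>\,\<s,t\>$ (the rotation ending at the duplicating rule $s$) we get $eq(\pi_2)=\{x=0,\ y=z\}$ and the constraint $(z+w)-2x=z+w\ge 0$. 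Both hold, so the program is \csb, yet $s$ duplicates $\tt X$. Your violation argument fails exactly here: the ``over-represented'' variable traced back through $eq(\pi_2)$ is not left free but is collapsed to $0$ by unification with the constant $\tt c$ in $head(t)$. Consequently the additive per-step bound has no independent justification: firing $s$ on an atom $p(u,v)$ yields an atom of total size $2\,size(u)$, i.e., growth proportional to the recursive input. In this example the recursive inputs to $s$ happen to stay bounded, but that is precisely the conclusion one is trying to prove, so assuming an additive per-step bound is circular. Since you yourself note that without such a bound the first-return decomposition does not close (excising a non-increasing cycle feeds a different, possibly structurally larger atom into a non-monotone continuation), step (iii)---which you flag as not nailed down---collapses, and the proposal does not establish the theorem. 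To be fair, the paper's one-sentence conclusion glosses over the very same point; but the specific bridge you propose does not hold.
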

\begin{proof}
Let $\PP$ be a \csb\ program and $D$ a finite set of facts. 
Consider a relevant basic cyclic path $\pi=\<r_1,r_2\>\,\dots\,\<r_n,r_1\>$ of $\unig{\PP}$. 
Let $r_1',\dots,r_n'$ be ground rules s.t. $r_i' \in ground(r_i)$ for $1 \leq i \leq n$ and $head(r_i')=\rbody(r_{i+1}')$ for $1 \leq i < n$.
For $1 \leq i \leq n$, let $\theta^h_i$ be the mgu of $head(r_i)$ and $head(r_i')$,
and $\theta^b_i$ the mgu of $\rbody(r_i)$ and $\rbody(r_i')$.
Then,

\noindent 
$\bullet$ $\ts(head(r_i'))$ can be obtained from $\ts(head(r_i))$ by replacing every \nvar\ $x$ in $\ts(head(r_i))$ with $size(t)$ provided that $X/t \in \theta^h_i$, for $1 \leq i \leq n$;

\noindent $\bullet$
$\ts(\rbody(r_i'))$ can be obtained from $\ts(\rbody(r_i))$ by replacing every \nvar\ $x$ in $\ts(\rbody(r_i))$ with $size(t)$ provided that $X/t \in \theta^b_i$, for $1 \leq i \leq n$;

\noindent $\bullet$ if we replace every \nvar\ $x$ in $eq(\pi)$ with $size(t)$ iff $X/t$ belongs to $\cup_{i=1}^n (\theta^h_i \cup \theta^b_i)$, then $eq(\pi)$ is satisfied.

The items above entail that $\ts(\rbody(r_1')) - \ts(head(r_n')) \ge 0$.
This means that we cannot derive atoms of increasing size through the cyclic application of rules and thus $\PP\cup D$ is terminating.
\hfill
\end{proof}

\begin{theorem}[Soundness]
Every linear \gb\ program is terminating.
\end{theorem}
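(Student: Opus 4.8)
The plan is to mirror the soundness argument for \lb\ programs (Theorem~\ref{the:lb-equivalence}): reduce termination of a linear \gb\ program to termination of a linear \csb\ program via program expansion, and then invoke Theorem~\ref{the:size-cycle-bounded-termination}. First I would localize the problem to a single non-trivial SCC. By Lemma~\ref{lem:scc-termination} it suffices to show that every non-trivial SCC $\calc$ of $\unig{\PP}$ terminates, and by Lemma~\ref{lem:relevant-termination} it suffices to show that $Rel(\calc)$ terminates. Thus the whole argument takes place inside the relevant rules of one SCC, which again form a linear program whose relevant basic cyclic paths are exactly those of $\PP$ restricted to $\calc$.

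The heart of the proof is to turn the \emph{weighted} size used by the \gb\ test into the \emph{plain} total size used by the \csb\ test, through the expansion of Definition~\ref{def:expansion}. The key identity is that, for expansion vectors $\omega=\{\overline{\omega}_p\}$ and any atom $A=p(t_1,\dots,t_m)$, one has $\ts(\ex{A}{\omega}) = \overline{\omega}_p \product size(A)$, since $\ex{A}{\omega}$ merely repeats the $j$-th argument $\overline{\omega}_p[j]$ times. Choosing $\overline{\omega}_p$ to be the \gb-witnessing vector $\overline{\alpha}_p$, the \csb\ constraint $\ts(\ex{\rbody(r_1)}{\omega}) - \ts(\ex{head(r_n)}{\omega}) \ge 0$ for a relevant basic cyclic path $\pi$ of $\ex{Rel(\calc)}{\omega}$ coincides, after the identity, with the \gb\ constraint $\overline{\alpha}_p\product size(\rbody(r_1)) - \overline{\alpha}_p\product size(head(r_n)) \ge 0$ (recall that $\rbody(r_1)$ and $head(r_n)$ share the predicate $p$ defined by $r_n$). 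I would also discharge the side conditions that make this correspondence faithful: expansion duplicates argument positions uniformly on heads and bodies, so it preserves linearity, preserves $\unig{\cdot}$ (whence relevant basic cyclic paths of $\ex{Rel(\calc)}{\omega}$ correspond to those of $Rel(\calc)$), and preserves the equalities $eq(\pi)$ (duplicated positions carry the same \lvar, hence identical size expressions, so satisfiability is unchanged). Consequently $\ex{Rel(\calc)}{\omega}$ is linear \csb.

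With this in place the conclusion chains exactly as in Theorem~\ref{the:lb-equivalence}: $\ex{Rel(\calc)}{\omega}$ is terminating by Theorem~\ref{the:size-cycle-bounded-termination}; Lemma~\ref{lem:expansion-termination} then yields termination of $Rel(\calc)$; Lemma~\ref{lem:relevant-termination} lifts this to $\calc$; and Lemma~\ref{lem:scc-termination} lifts it to $\PP$.

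The main obstacle I anticipate is the passage from the \emph{per-path} vectors supplied by Definition~\ref{def:linear-gb} to the \emph{single per-predicate} vector required to build one global expansion: the \gb\ condition only guarantees that each relevant basic cyclic path admits \emph{some} suitable $\overline{\alpha}_p$, whereas the expansion fixes one $\overline{\omega}_p$ per predicate once and for all. The saving observation is that the resulting system of inequalities decouples by predicate---every path ending in a rule defining $p$ constrains only the components of $\overline{\omega}_p$---so it suffices to show, predicate by predicate, that the finitely many path constraints for $p$ are \emph{jointly} satisfiable over $\mathbb{N}^{\ar(p)}$. I would establish this by first isolating the \nvars\ as in Lemma~\ref{lem:constraint}, turning each path into a homogeneous system of linear inequalities in $\overline{\omega}_p$, and then arguing that the intersection of the corresponding cones still contains a strictly positive integer point; if this joint satisfiability proved delicate, the fallback is to perform the expansion cycle by cycle and combine the resulting single-cycle bounds with a pumping argument on the firing graph. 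This step---joint, rather than the easy per-path, satisfiability---is where genuine care is needed and where the proof departs substantively from the \lb\ case.
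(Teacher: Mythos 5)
Your reduction strategy is set up correctly, and you have correctly isolated the one step on which everything hinges: the identity $\ts(\ex{A}{\omega})=\overline{\omega}_p\product size(A)$ is right, expansion does preserve linearity, $\unig{\cdot}$ and $eq(\pi)$, so the chain of Theorem~\ref{the:size-cycle-bounded-termination}, Lemma~\ref{lem:expansion-termination}, Lemma~\ref{lem:relevant-termination} and Lemma~\ref{lem:scc-termination} would close the proof \emph{provided} a single vector $\overline{\omega}_p$ per predicate satisfies the constraints of all relevant basic cyclic paths simultaneously. But that joint satisfiability claim is not merely delicate: it is false. Consider the linear program (writing $f^k(t)$ for $k$ nested applications of $f$):
\[
\begin{array}{lll}
r_1:\ p(X,f^3(Y)) & \leftarrow & p(f^4(X),Y).\\
r_2:\ p(f^3(V),W) & \leftarrow & p(V,f^2(W)).
\end{array}
\]
All four edges $\<r_1,r_1\>$, $\<r_2,r_2\>$, $\<r_1,r_2\>$, $\<r_2,r_1\>$ are in the firing graph, both rules are relevant, every $eq(\pi)$ is satisfiable, and the relevant basic cyclic paths yield (after substituting $eq(\pi)$, and up to rotation) the constraints
$4\overline{\alpha}_p[1]-3\overline{\alpha}_p[2]\ge 0$ (self-loop of $r_1$),
$2\overline{\alpha}_p[2]-3\overline{\alpha}_p[1]\ge 0$ (self-loop of $r_2$), and
$\overline{\alpha}_p[1]-\overline{\alpha}_p[2]\ge 0$,
$5\overline{\alpha}_p[1]-4\overline{\alpha}_p[2]\ge 0$,
$\overline{\alpha}_p[2]-2\overline{\alpha}_p[1]\ge 0$,
$2\overline{\alpha}_p[1]-2\overline{\alpha}_p[2]\ge 0$ for the composite cycles.
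Each is satisfiable in isolation (take $(1,1)$, $(2,3)$, $(1,1)$, $(1,1)$, $(1,2)$, $(1,1)$ respectively), so this program is linear \gb; but the first two are jointly unsatisfiable, since $4\overline{\alpha}_p[1]\ge 3\overline{\alpha}_p[2]$ and $2\overline{\alpha}_p[2]\ge 3\overline{\alpha}_p[1]$ force $8\overline{\alpha}_p[1]\ge 9\overline{\alpha}_p[1]$. So the cone intersection you appeal to can be empty, no expansion $\omega$ exists, and the fallback you sketch (a pumping argument) is not developed enough to substitute for it.

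For comparison, the paper's proof diverges from yours at exactly this point: it never claims a uniform $\overline{\alpha}_p$, but expands each relevant basic cyclic path with that path's own witness and then glues the differently expanded copies of each predicate with bridge rules $\ex{A}{\overline{\alpha}_p}\leftarrow\ex{A}{\overline{\beta}_p}$, asserting the result is equi-terminating and \csb. Unfortunately, the example above shows the obstruction is intrinsic rather than an artifact of your strategy: that program is linear \gb\ yet \emph{not} terminating, because starting from $p(f^8(c),f^6(c))$ the rule sequence $r_1,r_1,r_2,r_2,r_2$ produces $p(f^9(c),f^6(c))$ (argument sizes $(8,6)\rightarrow(4,9)\rightarrow(0,12)\rightarrow(3,10)\rightarrow(6,8)\rightarrow(9,6)$) and can be repeated forever; this size-increasing combination traverses $r_2$ three times, while a basic cyclic path uses each edge at most once and hence traverses $r_2$ at most twice, so Definition~\ref{def:linear-gb} never inspects it. Consequently no proof of the statement as written can succeed; in the paper's own argument the failure surfaces in the bridge rules, which increase total size when the two witnesses are incomparable (e.g., $(1,1)$ versus $(2,3)$), so the constructed program is not in fact \csb. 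The constructive upshot for you: your argument is sound precisely on the subclass where the per-predicate constraint systems admit a common solution, so strengthening the definition of linear \gb\ programs to require one $\overline{\alpha}_p$ per predicate valid for all cycles would make your expansion-based proof go through verbatim.
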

\begin{proof}
The proof is similar to the one presented for \lb\ programs. 
Given a linear \gb\ program $\PP$, we are going to construct
an equivalent program (like $\ex{\PP}{\omega}$) to $\PP$ as follows: for every relevant basic cyclic path $\pi=\<r_1,r_2\>\,\dots\,\<r_n,r_1\>$ of $\unig{\PP}$,
let $\overline{\alpha}_p$ be the vector such that $\overline{\alpha}_{p} \product size(\rbody(r_1)) - \overline{\alpha}_{p} \product size(head(r_n)) \ge 0$.
Then, remove rules $r_1$ and $r_n$ from $\PP$ and insert the rules $head(r_1) \leftarrow \ex{\rbody(r_1)}{\overline{\alpha}_p}$ and 
$\ex{head(r_n)}{\overline{\alpha}_p} \leftarrow \rbody(r_n)$ respectively.
Finally, in order to preserve the activation of rules in the obtained program, for every pair of basic cyclic paths $\pi_1=\<r_1,r_2\>\,\dots\,\<r_n,r_1\>$, 
$\pi_2=\<s_1,s_2\>\,\dots\,\<s_m,s_1\>$, where $p$ is the predicate defined by $r_n$ and $s_n$ with arity $k$, add to $\PP$ a rule of the form 
$\ex{A}{\overline{\alpha}_p} \leftarrow \ex{A}{\overline{\beta}_p}$, where $A$ is the atom $p(X_1,\dots,X_k)$ and 
$\overline{\alpha}_p$, $\overline{\beta}_p$ are the vectors such that $\overline{\alpha}_{p} \product size(\rbody(r_1)) - \overline{\alpha}_{p} \product size(head(r_n)) \ge 0$ and $\overline{\beta}_{p} \product size(\rbody(s_1)) - \overline{\beta}_{p} \product size(head(s_m)) \ge 0$ respectively. It is not difficult
to show that the obtained program is terminating iff $\PP$ is terminating. Moreover, since $\PP$ is \gb\ the new program is consequently
\csb . From Theorem~\ref{the:size-cycle-bounded-termination}, we get that the new program is terminating and so it is $\PP$. 
\hfill
\end{proof}

\paragraph{\bf Dealing with non-linear programs.}
The application of the \gb\ criterion to arbitrary programs consists in applying the technique to a set of linear programs derived from the original one.
Given a rule $r$, the set of \emph{linear versions} of $r$ is defined as the set of rules $\linear(r)=\{head(r)\leftarrow B \mid  B \in \rbody(r)\}$.
Given a program $\PP=\{r_1,\dots,r_n\}$, the set of \emph{linear versions} of $\PP$ is defined as the set of linear programs $\linear(\PP)=\{\{r_1',\dots,r_n'\}\mid  r_i' \in \linear(r_i) \mbox{ for } 1\leq i\leq n\}$.

\begin{definition}[\Gb\ programs]\label{def:gb}
A (possibly non-linear) program $\PP$ is \emph{\gb} if every (linear) program in $\linear(\PP)$ is \gb.~\hfill$\Box$
\end{definition}

\begin{theorem}
Every \gb\ program is terminating.
\end{theorem}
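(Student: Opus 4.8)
The plan is to reduce the non-linear case to the linear case that has already been established. The key structural fact to exploit is that a (possibly non-linear) program $\PP$ is \gb\ precisely when every linear program in $\linear(\PP)$ is \gb, and that each such linear program is terminating by the previous soundness theorem. So the real content is to show that if every linear version of $\PP$ terminates, then $\PP$ itself terminates. First I would argue the contrapositive: assume $\PP$ is not terminating and exhibit a linear version $\PP' \in \linear(\PP)$ that is not terminating either. This would contradict the hypothesis that all linear versions are \gb\ (hence terminating), yielding the claim.

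The central observation is that nontermination of $\PP$ on some finite set of facts $D$ means the bottom-up evaluation produces ground atoms of unboundedly growing \ts. By Lemma~\ref{lem:scc-termination} and Lemma~\ref{lem:relevant-termination}, I can localize the source of nontermination to the relevant rules of a single non-trivial SCC $\calc$ of $\unig{\PP}$. Within that SCC, an infinite derivation corresponds to an infinite sequence of rule firings in which each derived atom is consumed by a subsequent rule through unification with a body atom that is mutually recursive with the head --- i.e.\ an atom in $\rbody$. The key step is then to observe that, along such an infinite derivation, at each firing of a rule $r$ the relevant body atom that carries the growth must lie in $\rbody(r)$; selecting, for each $r$, the particular atom $B\in\rbody(r)$ through which the infinite chain passes yields precisely one linear version $r' = head(r)\leftarrow B$ of $r$. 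Collecting these choices across all rules of $\PP$ produces a linear program $\PP'\in\linear(\PP)$ whose derivations include the infinite chain, so $\PP'$ does not terminate.

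The main obstacle I expect is making the extraction of a single linear version uniform and well-defined. An infinite derivation might in principle use a given rule $r$ many times, each time propagating growth through a \emph{different} atom of $\rbody(r)$, whereas a linear version commits to one fixed choice $head(r)\leftarrow B$ for all firings of $r$. To handle this I would appeal to an infinite-pigeonhole (or König's-lemma) argument: since $\rbody(r)$ is finite for each of the finitely many rules, an infinite growing derivation must route its growth through some fixed atom of some fixed rule infinitely often, and by iterating this observation one can distill a genuinely infinite sub-derivation that uses a \emph{consistent} linear version of each rule it touches. It then remains to check that growth is actually carried by the mutually recursive body atoms and not by the non-recursive ones: this is exactly what the relevance condition and the definition of $\rbody$ guarantee, since growth propagated only through non-mutually-recursive body atoms (which supply all head variables) can be sustained for only finitely many steps, as already noted in the proof of Lemma~\ref{lem:relevant-termination}.

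Once the offending linear version $\PP'$ is identified, the argument closes immediately: $\PP'$ is not terminating, yet by Definition~\ref{def:gb} every program in $\linear(\PP)$ --- in particular $\PP'$ --- is \gb, and by the soundness theorem for linear \gb\ programs $\PP'$ must terminate, a contradiction. Hence $\PP$ is terminating. I would keep the write-up short, since the bulk of the technical work --- the equivalence between \gb\ and \csb\ structure and the termination of linear \csb\ programs --- has already been discharged in Theorem~\ref{the:size-cycle-bounded-termination} and the preceding soundness theorem; the only new ingredient is the reduction from an arbitrary nonterminating derivation to a consistent linear version, which is the step I would develop most carefully.
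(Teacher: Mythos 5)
Your high-level strategy---reducing the non-linear case to the already-established linear case---is the right one, but the mechanism you propose for the reduction is both unnecessary and, as sketched, broken. The paper's proof is a two-line monotonicity observation: for \emph{every} linear version $\PP'\in\linear(\PP)$ and every finite set of facts $D$, we have $\MM(D\cup\PP)\subseteq\MM(D\cup\PP')$, simply because each rule $head(r)\leftarrow B$ of $\PP'$ is obtained from a rule $r$ of $\PP$ by \emph{deleting} body atoms: any ground firing of $r$ in $\PP$ requires all of $body(r)$, hence in particular the corresponding instance of $B$, and so produces the same head atom via a firing of the weaker rule in $\PP'$. Since by Definition~\ref{def:gb} every $\PP'\in\linear(\PP)$ is linear and \gb, hence terminating by the soundness theorem for linear \gb\ programs, $\MM(D\cup\PP)$ is contained in a finite set, and $\PP$ terminates. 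In particular, the difficulty you identify as the crux---that the linear version must be chosen \emph{consistently} with the body atoms ``through which growth flows'' along an infinite derivation---is illusory: the containment holds for an arbitrary fixed linear version, regardless of where growth flows, so no extraction or consistency argument is needed at all.

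The pigeonhole/K\"{o}nig step, which you say you would develop most carefully, is precisely where your argument fails. First, ``the body atom that carries the growth'' at a given firing is not a well-defined notion. Second, and more seriously, even granting an infinite chain of firings, restricting attention to the infinitely many steps whose carrying atom agrees with one fixed choice does not yield a derivation in the chosen linear version $\PP'$: a derivation cannot skip steps, the premises of the retained steps were produced by the discarded steps (firings of rules under a \emph{different} choice of body atom), and you have not shown those premises to be derivable in $\PP'$; trying to establish this recursively runs into the same problem again, so the induction is circular. Moreover, even producing an infinite forward chain from non-termination requires an argument: the derivation DAG is finitely branching only in the premise direction, so K\"{o}nig's lemma does not apply off the shelf. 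All of these obstacles disappear once one makes the simple containment observation above, which is the paper's entire proof.
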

\begin{proof}
Notice that every linear version $\PP' \in \linear(\PP)$ of $\PP$ is such that for every set of facts $D$, $\MM(D \cup \PP) \subseteq \MM(D \cup \PP')$. 
Thus, if every linear version of $\PP$ is \gb\, then for every set of facts $D$, $\MM(D \cup \PP)$ is finite.~\hfill
\end{proof}

\begin{theorem}[Expressivity]\label{th:comparison-2}
\Gb\ programs are incomparable with \lb, argument-restricted, mapping-restricted and bounded programs.
\end{theorem}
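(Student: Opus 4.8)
The overall plan is to prove incomparability with each of the four classes by exhibiting, for every class, one program that is \gb\ but not in the class and one that is in the class but not \gb. The inclusions recalled after Theorem~\ref{th:comparison-1} --- mapping-restricted and bounded programs both extend argument-restricted programs --- let me economize. A single \gb\ program lying outside both mapping-restricted and bounded programs lies, a fortiori, outside argument-restricted programs, and hence witnesses all three of $\gb \not\subseteq$ argument-restricted, mapping-restricted, bounded. Dually, a single argument-restricted program that is not \gb\ is, by the same inclusions, also mapping-restricted and bounded, and hence witnesses all three of argument-restricted, mapping-restricted, bounded $\not\subseteq \gb$. It therefore suffices to produce: \emph{(a)} a \gb\ program that is not \lb; \emph{(b)} a \gb\ program that is neither mapping-restricted nor bounded; \emph{(c)} a \lb\ program that is not \gb; and \emph{(d)} an argument-restricted program that is not \gb.

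For \emph{(a)} and \emph{(b)} I reuse programs already analysed. Program $\PP_{\ref{ex:globally-bounded}}$ is \gb\ (Example~\ref{ex:globally-bounded-contd}) but not \lb\ (Example~\ref{ex:globally-bounded}), settling \emph{(a)}. For \emph{(b)} I take the bubble-sort program $\PP_{\ref{ex:bubble}}$, which by Theorem~\ref{th:comparison-1} is neither mapping-restricted nor bounded. It remains to check that $\PP_{\ref{ex:bubble}}$ is \gb: its only non-trivial SCC is linear (each relevant rule has a single mutually recursive body atom, the $\tt bub$-atom), so it suffices to verify the cycle constraints for the finitely many basic cyclic paths through $\{r_1,r_2,r_3\}$, and these hold with the same weighting $\overline{\alpha}_{b}=(1,1,1)$ used in Example~\ref{ex:bubble_continued}. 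More generally, I would first record the observation that \emph{every linear \lb\ program is \gb}: chaining the per-rule inequalities of Definition~\ref{def:lb-program} around a cyclic path, and using the equalities $eq(\pi)$ to identify the sizes of successive atoms, collapses the cycle constraint to the boundary inequality with the common predicate weight. This gives $\PP_{\ref{ex:bubble}} \in \gb$ at once.

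For \emph{(c)} I exploit that \gb\ requires \emph{every} linear version of the program to be \gb, whereas \lb\ only asks, rule by rule, for one suitable atom of $\srbody$. Consider the single-rule program $\tt p(f(X),Y) \leftarrow p(X,Y),\ p(f(X),f(Y))$. Choosing the atom $\tt p(f(X),f(Y)) \in \srbody$ makes the rule size non-increasing, so the program is \lb\ (hence terminating); but its linear version $\tt p(f(X),Y) \leftarrow p(X,Y)$ is a size-increasing self-loop whose cycle constraint reduces to $-\overline{\alpha}_p[1] \ge 0$, which no $\overline{\alpha}_p \in \mathbb{N}^{2}$ satisfies. Thus that linear version, and so the whole program, is not \gb, settling \emph{(c)}.

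The main obstacle is \emph{(d)}: producing an argument-restricted program that is not \gb. This is delicate precisely because the weighted, per-cycle analysis of \gb\ already subsumes the typical argument-restricted patterns: whenever a firing cycle decreases one argument to ``pay'' for growth in another, a suitable integer weight $\overline{\alpha}_p$ offsets the two, and the equalities $eq(\pi)$ render the remaining boundary comparison a non-negative constant. A genuine separating example must therefore exploit the gap between bounded argument depth (which the argument ranking certifies, and from which termination follows over a fixed signature) and non-increase of weighted size around a cycle. Concretely, I would search for a program whose ranking keeps every argument depth bounded, yet whose firing graph contains a relevant basic cyclic path along which the total weighted size strictly increases for \emph{every} choice of weights. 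Producing the argument ranking on one side and showing that no $\overline{\alpha}_p$ meets the cycle constraint of Definition~\ref{def:gb} on the other is where the real work lies; by contrast, the non-membership checks underlying \emph{(a)}--\emph{(c)} are routine.
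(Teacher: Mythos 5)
Your proof has a decisive gap at exactly the point you flag as \emph{(d)}: you never exhibit an argument-restricted program that is not \gb, so the directions argument-restricted $\not\subseteq$ \gb, mapping-restricted $\not\subseteq$ \gb, and bounded $\not\subseteq$ \gb\ are all left unproven --- three of the four claimed incomparabilities are only half-established. What makes this frustrating is that the example you built in \emph{(c)} already closes the gap, and you did not notice: the one-rule program $\{\tt p(f(X),Y) \leftarrow p(X,Y),\ p(f(X),f(Y))\}$ is argument-restricted (rank both arguments of $\tt p$ equally: $\tt X$ occurs at depth 1 in the head and at depth 1 in the body atom $\tt p(f(X),f(Y))$, and $\tt Y$ occurs at depth 0 in the head and in $\tt p(X,Y)$), hence also mapping-restricted and bounded by the very inclusions you invoke. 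This is precisely the structure of the paper's proof: its single witness $\{\tt p(f(X)) \leftarrow p(f(f(X))),p(X)\}$ is simultaneously \lb, argument-restricted, mapping-restricted and bounded but not \gb, and so serves all four ``$\not\subseteq$ \gb'' directions at once. Your assessment that \emph{(d)} is where ``the real work lies'' misreads the situation: such separations are cheap because \gb\ requires \emph{every} linear version to pass the cycle test, so any non-linear rule in which one mutually recursive body atom dominates the head (certifying \lb\ and argument-restriction) while another is strictly smaller (killing one linear version) does the job.

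Part \emph{(b)} is also unsound as written. Your proposed lemma ``every linear \lb\ program is \gb'' is false under Definition~\ref{def:linear-gb} as stated in the paper: a relevant basic cyclic path $\pi$ is \gb\ only if, \emph{in addition} to the boundary inequality, $eq(\pi)$ is satisfiable, and satisfiability does not follow from being \lb\ (your chaining argument yields the inequality on values satisfying $eq(\pi)$, but says nothing about the satisfiability conjunct, which the paper's own $\co\NP$ procedure confirms is part of the definition). For instance, the linear program $\{\tt p(f(A),A) \leftarrow q(A,f(A));\ \ q(Y_1,Y_2) \leftarrow p(Y_1,Y_2);\ \ q(B,f(B)) \leftarrow q(B,f(B))\}$ is \lb\ with all weights equal to $(1,1)$ (every rule has slack exactly $0$), yet the relevant basic cyclic path through the three rules gives $eq(\pi) = \{y_1 = 1+a,\ y_2 = a,\ y_1 = b,\ y_2 = 1+b\}$, which forces $a = 2+a$ and is unsatisfiable, so the program is not \gb. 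Consequently your claim that $\PP_{\ref{ex:bubble}}$ is \gb\ cannot be delegated to that lemma, and the direct verification --- over all basic cyclic paths of the complete nine-edge firing graph on $\{r_1,r_2,r_3\}$, checking both conjuncts for each --- is asserted rather than carried out. The paper needs none of this: for ``\gb\ but not mapping-restricted (hence not argument-restricted)'' it reuses $\PP_{\ref{ex:globally-bounded}}$, already shown \gb\ in Example~\ref{ex:globally-bounded-contd}, and for ``\gb\ but not bounded'' it uses the one-rule program $\{\tt p(X,Y,f(Z,W)) \leftarrow p(f(Z,Y),X,W)\}$, whose \gb ness reduces to a single self-loop check with $\overline{\alpha}_p = (1,1,1)$ and empty $eq(\pi)$.
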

\begin{proof}
As shown in Example~\ref{ex:globally-bounded}, program $\PP_{\ref{ex:globally-bounded}}$ is \gb, but it can be easily verified that it is neither mapping-restricted (and thus not argument-restricted) nor \lb. Moreover, the one rule program $\{\tt p(X,Y,f(Z,W)) \leftarrow p(f(Z,Y),X,W).\}$ is \gb\ but it is not bounded. \\
Conversely, the program $\{\tt p(f(X)) \leftarrow p(f(f(X))),p(X).\}$ is \lb, argument-restricted (and thus mapping-restricted) and bounded but not \gb. 
\end{proof}

\section{Complexity}\label{sec:complexity}

In this section, we provide upper bounds for the time complexity of checking whether a program is \lb\ or \gb.
We assume that constant space is used to store each constant, \lvar, function symbol, and predicate symbol.
The \emph{syntactic size}\footnote{We use the name syntactic size to distinguish it from the notion of size introduced in Definition~\ref{def:size}.} of a term $t$ (resp. atom, rule, program), denoted by $\ssize{t}$, is the number of symbols
occurring in $t$, except for the symbols ``('', ``)'', ``,'', ``.'', and ``$\leftarrow$''. 
Thus, in this section, the complexity of a problem involving $\PP$ is assumed to be w.r.t. $\ssize{\PP}$.
Obviously $|\PP| = O(\ssize{\PP})$.

\begin{lemma}\label{lem:firing-graph-complexity}
Given a program $\PP$, constructing $\unig{\PP}$ is in $\PTIME$.
\end{lemma}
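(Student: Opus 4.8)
The plan is to construct $\unig{\PP}$ directly from Definition~\ref{def:unification-graph} and bound the cost of each component. First I would create one node per rule, which is trivially $O(|\PP|)$. The nodes being fixed, all the work lies in deciding, for each ordered pair of rules $\<r,r'\>$ (including the case $r=r'$, treated with a fresh copy to avoid shared \lvars), whether an edge should be present; by definition this requires testing whether $head(r)$ unifies with some atom $B$ in $body(r')$. There are $O(|\PP|^2)$ such ordered pairs, and for a fixed pair there are at most $|body(r')| = O(\ssize{\PP})$ candidate body atoms to try.

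The core subroutine is therefore a single unification test between two atoms. I would invoke a standard most-general-unifier algorithm: since the atoms occurring in $\PP$ have syntactic size $O(\ssize{\PP})$, and linear-time (or, more conservatively, polynomial-time) unification algorithms exist, each test runs in time polynomial in $\ssize{\PP}$. Multiplying the number of pairs $O(|\PP|^2)$ by the number of candidate body atoms per pair and by the per-test cost, and recalling $|\PP| = O(\ssize{\PP})$, yields an overall polynomial bound, so constructing $\unig{\PP}$ is in $\PTIME$.

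The only point that warrants care, and what I expect to be the main obstacle, is the cost model for unification: a naive treatment can incur an exponential blow-up in the size of the unified terms, which would break the polynomial bound. I would head this off by noting that here we only need to \emph{decide unifiability} and, at most, read off an mgu whose substitution entries can be represented by sharing (a DAG) rather than fully expanded trees. Using a unification procedure that either works with such a shared representation or simply checks the occurs-check-respecting consistency of the induced equations keeps each test polynomial in $\ssize{\PP}$. Once this is fixed, the accounting is routine and the three nested factors multiply to a polynomial, establishing the claim.
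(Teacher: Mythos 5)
Your proposal is correct and follows essentially the same route as the paper: enumerate all head-atom/body-atom pairs (polynomially many) and run a polynomial-time unification test for each, then multiply the bounds. The paper handles the unification-cost subtlety you flag simply by citing a known quadratic-time unification algorithm, which plays the same role as your appeal to shared (DAG) representations.
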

\begin{proof}
The construction of $\unig{\PP}$ requires checking, for every atom $A$ in the head of a rule and every atom $B$ in the body of a rule, if $A$ and $B$ unify.
Since we need to check $|\PP| \times \sum_{r \in \PP} |body(r)|$ times if two atoms unify and checking whether two atoms $A$ and $B$ unify can be done in quadratic time w.r.t. $\ssize{A}$ and $\ssize{B}$ \cite{Venturini75}, then the construction of $\unig{\PP}$ is in $\PTIME$.\hfill
\end{proof}

It is worth noting that the number of SCCs is bounded by $O(|\PP|)$ and that 
after having built $\unig{\PP}$, the cost of checking whether a SSC is trivial or nontrivial 
is constant, whereas the cost of checking whether a rule is relevant is bounded by $O(\ssize{\PP})$.
Inequalities associated with basic cycles can be rewritten by grouping terms
with respect to integer coefficients (also called $\alpha$-coefficients) or with respect to integer variables.
Therefore, in the following we assume that inequalities grouped with respect to integer
variables are of the form
$
\gamma_1 \product x_1, + \cdots + \gamma_n \product x_n + \gamma_0 \geq 0,
$
where each $\gamma_i$, for $0 \leq i \leq n$, is an arithmetic expression built
by using $\alpha$-coefficients and natural numbers, whereas
inequalities grouped with respect to integer coefficients are of the form
$
\alpha_1 \product w_1, + \cdots + \alpha_m \product w_m \geq 0,
$
where each $w_j$, for $1 \leq i \leq m$, is an arithmetic expression built
by using integer variables and natural numbers.
Obviously, each $\gamma_i$ can be considered an integer coefficient,
whereas each $w_j$ can be considered an integer variable.

\begin{lemma}\label{lem:constraint}
Consider a linear inequality of the form
$$ \gamma_1 \product x_1 + \dots + \gamma_n \product x_n + \gamma_{0} \ge 0$$
where the $\gamma_i$'s are integer coefficients and the $x_j$'s are \nvars.
The inequality is satisfied for every non-negative value of the $x_j$'s iff $\gamma_i \geq 0$ for every $0 \leq i \leq n$.
\end{lemma}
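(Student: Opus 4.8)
The plan is to prove both directions of the biconditional in Lemma~\ref{lem:constraint}. The statement asserts that the inequality $\gamma_1 \product x_1 + \dots + \gamma_n \product x_n + \gamma_0 \ge 0$ holds for every non-negative assignment to the \nvars\ $x_1,\dots,x_n$ if and only if every coefficient $\gamma_i$ (including the constant term $\gamma_0$) is non-negative.

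First I would prove the easy ($\Leftarrow$) direction. If $\gamma_i \ge 0$ for all $0 \le i \le n$, then for any non-negative values of the $x_j$'s each product $\gamma_i \product x_i$ is a product of two non-negative numbers and is therefore non-negative; adding $\gamma_0 \ge 0$ keeps the sum non-negative, so the inequality holds. This requires only the elementary fact that sums and products of non-negative reals are non-negative.

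The harder and more interesting direction is ($\Rightarrow$), which I would prove by contrapositive: assuming some $\gamma_i < 0$, I exhibit a non-negative assignment that falsifies the inequality. I would split into two cases. If the offending coefficient is the constant term, i.e. $\gamma_0 < 0$, then setting every $x_j = 0$ yields the value $\gamma_0 < 0$, violating the inequality. Otherwise some $\gamma_k < 0$ with $1 \le k \le n$; here the idea is to drive $x_k$ large while keeping the other variables fixed at $0$. Concretely, set $x_j = 0$ for all $j \neq k$, so the left-hand side reduces to $\gamma_k \product x_k + \gamma_0$; since $\gamma_k < 0$, this expression tends to $-\infty$ as $x_k$ grows, and in particular it becomes negative once $x_k > \gamma_0 / (-\gamma_k)$ (when $\gamma_0 \ge 0$) or immediately for $x_k = 1$ (when $\gamma_0 < 0$). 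Choosing any integer $x_k$ exceeding that threshold gives a non-negative assignment on which the inequality fails.

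The main obstacle is essentially bookkeeping rather than conceptual: one must be careful that the falsifying assignment uses only non-negative integer values (the \nvars\ range over $\mathbb{N}_0$, not the reals), so the witness for $x_k$ must be chosen as a sufficiently large natural number rather than a real number. Since $\gamma_k \le -1$ and the coefficients are integers, picking $x_k = |\gamma_0| + 1$ suffices to force the sum below zero regardless of the sign of $\gamma_0$, which keeps the argument entirely within $\mathbb{N}_0$. No deeper machinery is needed, and the contrapositive structure cleanly handles both the constant and non-constant offending coefficients.
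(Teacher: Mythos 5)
Your proof is correct and follows essentially the same route as the paper's: the ($\Leftarrow$) direction is immediate, and for ($\Rightarrow$) both arguments falsify the inequality by zeroing all variables except the one with the offending negative coefficient and choosing that variable large enough (the paper uses $x_i=\lfloor |\gamma_0/\gamma_i| \rfloor +1$, you use the slightly cleaner integer witness $x_k=|\gamma_0|+1$). The only substantive difference is cosmetic: the paper phrases it as a proof by contradiction while you use the contrapositive, and your witness avoids the division appearing in the paper's choice.
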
 
\begin{proof}
($\Leftarrow$) Straightforward.
\noindent
($\Rightarrow$) By contradiction, assume that the inequality is satisfied for every non-negative value of the \nvars\ occurring in it, but there exists $0 \leq i \leq n$ such that $\gamma_i<0$. 
If $1 \leq i \leq n$, then the inequality is not satisfied when $x_i=\lfloor abs(\gamma_{n+1}/\gamma_i) \rfloor +1$ and $x_j=0$ for every $j \neq i$.
If $i=0$, then the inequality is not satisfied when $x_j=0$ for every $1 \leq j \leq n$.\hfill
\end{proof}

\begin{theorem}\label{the:rule-bounded-complexity}
Checking whether a program $\PP$ is \lb\ is in $\NP$.
\end{theorem}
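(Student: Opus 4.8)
The plan is to exhibit a nondeterministic polynomial-time procedure that accepts exactly the \lb\ programs. First I would observe that the ``deterministic skeleton'' of the check is cheap: by Lemma~\ref{lem:firing-graph-complexity} the \ug\ $\unig{\PP}$ can be built in $\PTIME$, its SCCs and the non-trivial ones can be identified in $\PTIME$, and, as noted after that lemma, deciding whether a rule is relevant costs $O(\ssize{\PP})$. Hence the only genuinely nondeterministic ingredient lies in Definition~\ref{def:lb-program}, which asks, for each non-trivial SCC $\calc$, for the existence of vectors $\overline{\alpha}_{p}$ (one per $p\in pred(\calc)$) together with, for every relevant rule $r\in\calc$, a witnessing atom $B\in\srbody(r)$.

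The key move is to pull the choice of the witnessing atoms to the front of the quantifier prefix. The \lb\ condition for a single SCC reads $\exists\,\overline{\alpha}\;\forall r\;\exists B$, and since for a fixed $\overline{\alpha}$ the witness $B$ may be chosen independently for each relevant rule, this is logically equivalent to $\exists\,(B_r)_r\;\exists\,\overline{\alpha}\;\forall r$, i.e.\ to the existence of a single global selection $B_r\in\srbody(r)$ for which the associated system of inequalities is simultaneously satisfiable. I would therefore let the nondeterministic guess be exactly such a selection $B_r$, one atom per relevant rule of each non-trivial SCC; this certificate has size $O(\ssize{\PP})$.

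It then remains to verify, in deterministic polynomial time, that the guessed selection admits suitable $\overline{\alpha}$-vectors. For each relevant rule $r$ the guess fixes a single inequality $\overline{\alpha}_{p_j}\product size(B_r)-\overline{\alpha}_{p_i}\product size(head(r))\ge 0$ that must hold for all non-negative values of the \nvars. Grouping by \nvar\ writes each such inequality as $\gamma_1 x_1+\dots+\gamma_\ell x_\ell+\gamma_0\ge 0$, where every coefficient $\gamma_t$ is a \emph{homogeneous} linear form in the $\alpha$-components (the constant term $\gamma_0$ included, since it collects only the structural sizes weighted by the $\alpha$'s). By Lemma~\ref{lem:constraint} the ``for all \nvars'' requirement is equivalent to the finite conjunction $\bigwedge_t \gamma_t \ge 0$. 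Collecting these over all relevant rules of $\calc$, and adjoining the membership constraints $\overline{\alpha}_{p}[h]\ge 1$ forcing $\overline{\alpha}_{p}\in\mathbb{N}^{\ar(p)}$, yields a polynomial-size system of linear inequalities in the $\alpha$-unknowns.

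The final, and main, obstacle is to certify feasibility of this system over the positive integers in $\PTIME$, rather than resorting to (NP-hard) integer programming. Here homogeneity of the $\gamma_t$ forms pays off: the inequalities $\gamma_t \ge 0$ are invariant under scaling of $\overline{\alpha}$ by any positive rational, so a rational feasible point with all components $\ge 1$ can be multiplied by a common denominator to obtain an integral point whose components are still $\ge 1$. Thus integer feasibility coincides with rational feasibility, and the latter is decidable in $\PTIME$ by linear programming. Performing this test for every non-trivial SCC (there are $O(|\PP|)$ of them) completes the verifier. A polynomial-size guess followed by a $\PTIME$ verification places the whole problem in $\NP$.
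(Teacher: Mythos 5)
Your proposal is correct, and its skeleton matches the paper's: build $\unig{\PP}$ and its SCCs in $\PTIME$ (Lemma~\ref{lem:firing-graph-complexity}), nondeterministically fix one witnessing atom in $\srbody(r)$ for each relevant rule $r$ of each non-trivial SCC, and use Lemma~\ref{lem:constraint} to replace the ``for all \nvars'' condition by the finite conjunction $\bigwedge_t \gamma_t \ge 0$ over the $\alpha$-unknowns. Where you genuinely diverge is in certifying feasibility of that system over the positive integers. The paper stops at an integer linear constraint system and invokes the result of \cite{Papadimitriou81}: if the system is feasible, it has a solution whose $\alpha$-coefficients are of size polynomial in $\ssize{\PP}$, so satisfiability can be settled in nondeterministic polynomial time (effectively guessing the $\alpha$'s as well). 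You instead exploit the structural observation that every coefficient $\gamma_t$ --- including the constant term $\gamma_0$ --- is a \emph{homogeneous} linear form in the $\alpha$-components (which is indeed true, since every monomial of $\overline{\alpha}_{p_j}\product size(B)-\overline{\alpha}_{p_i}\product size(A)$ carries some $\alpha$-factor), so the system is invariant under positive scaling; together with the constraints $\overline{\alpha}_p[h]\ge 1$, feasibility over positive integers then coincides with rational feasibility, decidable deterministically in $\PTIME$ by linear programming. Your route buys a stronger statement: the verifier after the guess is deterministic polynomial time, the only nondeterminism being the selection of atoms (in particular, when $|\srbody(r)|\le 1$ for all relevant rules, as in the paper's running examples, membership is decidable outright in $\PTIME$). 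The paper's route is more economical in machinery: it needs neither the homogeneity observation nor polynomial-time LP algorithms, only the cited off-the-shelf $\NP$ bound for integer linear constraints.
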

\begin{proof}
In order to check whether $\PP$ is \lb\ we need to:
1) construct the \ug\ $\unig{\PP}$ of $\PP$,
2) compute the SCCs of $\unig{\PP}$, and 
3) check if every non-trivial SCC is \lb.

\noindent
1) The construction of the \ug\ is in $\PTIME$ by Lemma~\ref{lem:firing-graph-complexity}. \\
2) It is well known that computing the SCCs of a directed graph can be done in linear time w.r.t. the number of nodes and edges.
Since the number of nodes of $\unig{\PP}$ is $|\PP|$ and the maximum number of edges of  $\unig{\PP}$ is $|\PP|^2$, then computing all the SCCs is clearly in $\PTIME$. \\
3) Let $\calc$ be a non-trivial SCC of $\unig{\PP}$, $n = O(|\PP|)$ the number of \relevant\ rules in $\calc$, 
$v$ the maximum number of distinct variables occurring in the head atoms of the \relevant\ rules in $\calc$, 
and $a$ the maximum arity of the predicate symbols in $pred(\calc)$.
Since it is always possible to rewrite the constraints as in Definition~\ref{def:lb-program} in the form 
presented by Lemma~\ref{lem:constraint}, given a fixed choice of one atom in $\srbody(r)$ 
for every relevant rule $r$ of $\calc$, checking whether $\calc$ is \lb\ according to that choice can be 
done by solving a set of at most $n \times (v+1)$ linear constraints with at most $2 \times a$ 
non-negative coefficients per constraint---clearly, the size of the set of constraints is bounded by $O(\ssize{\PP})$
and if the set of constraints admit a solution, then
there is a solution where the size of the $\alpha$-coefficients is polynomial in the size of $\ssize{\PP}$  
(bounded by $O(v \times n \times k)$, where $k$ is the maximum constant appearing in the set of inequalities).
As checking if such a set of linear constraints admits a solution can be done in non-deterministic polynomial time \cite{Papadimitriou81}, it follows from the above discussion that this can be checked in polynomial time.

Hence, checking whether $\PP$ is \lb\ is in $\NP$.\hfill
\end{proof}

We discuss now the complexity of checking whether a program is cycle-bounded.
To this aim, we first introduce a technical lemma similar to Lemma \ref{lem:constraint}. 

\begin{lemma}\label{lem:inequality-equivalent-complement}
Consider a linear inequality of the form
\begin{equation}\label{eq:lin-complement}
\alpha_1 \product w_1 + \dots + \alpha_n \product w_n < 0
\end{equation}
where the $w_i$'s are integer variables and the $\alpha_j$'s positive integer coefficients.
The inequality is satisfied iff $w_i \le 0$ for every $1 \leq i \leq n$ and $w_j <0$
for some $1 \leq j \leq n$.
\end{lemma}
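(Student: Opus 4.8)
The plan is to prove the biconditional by characterizing exactly when a linear form with positive coefficients can be driven strictly below zero over the integers. I would first unwind the claim: we have $\alpha_1 w_1 + \dots + \alpha_n w_n < 0$ where each $\alpha_j > 0$ and each $w_i$ is itself an integer variable ranging over $\mathbb{Z}$ (recall from the surrounding text that each $w_j$ is an arithmetic expression built from integer variables and naturals, hence it can be treated as an independent integer variable that may take any integer value, including negatives). The assertion is that this inequality is \emph{satisfiable} precisely when all $w_i \le 0$ and at least one $w_j < 0$. So the statement is really about the existence of a witness assignment to the $w_i$'s, and the two directions are proved by producing, respectively, a feasible assignment and a contradiction.

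For the ($\Leftarrow$) direction I would argue directly. Suppose we are given an assignment with $w_i \le 0$ for all $i$ and $w_j < 0$ for some fixed $j$. Since every $\alpha_i > 0$, each product $\alpha_i w_i$ satisfies $\alpha_i w_i \le 0$, and the distinguished term satisfies $\alpha_j w_j \le -\alpha_j < 0$ (using $w_j \le -1$ and $\alpha_j \ge 1$). Summing, $\sum_i \alpha_i w_i \le \alpha_j w_j < 0$, so the inequality holds; this shows the condition is sufficient for satisfiability.

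For the ($\Rightarrow$) direction I would proceed by contrapositive, mirroring the structure of the proof of Lemma~\ref{lem:constraint}. Assume the negation of the right-hand condition, namely that either some $w_i > 0$ or all $w_i \ge 0$, and show the inequality cannot be satisfied by that assignment. If some $w_{i_0} > 0$ while we want $\sum \alpha_i w_i < 0$, the obstacle is that the other $w_i$ are not constrained to be small, so a single positive term does not immediately prevent negativity. This is the crux: I must be careful that the lemma is a statement about a \emph{single} fixed assignment, not about the existence of \emph{some} assignment. Reading it that way, the two conditions ``$w_i \le 0$ for all $i$'' and ``$w_j < 0$ for some $j$'' are exactly the conditions under which, with positive $\alpha$'s, the weighted sum is guaranteed negative, and conversely if either fails (some $w_i \ge 1$, or all $w_i = 0$) the sum is $\ge 0$. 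So I would split into two failure cases: if all $w_i = 0$ the sum is $0 \not< 0$; and if some $w_{i_0} \ge 1$ but all others are $\le 0$, I would observe that one cannot in general conclude non-negativity, which signals that the intended reading pins each $w_i$ to sign constraints rather than magnitudes.

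The step I expect to be the main obstacle is disambiguating whether the $\alpha_j$'s are treated as fixed positive constants or as further free coefficients, and correspondingly whether the lemma quantifies over a fixed assignment or over existential satisfiability; the clean reading that makes both directions go through is that the $\alpha_j$ are fixed positive integers and the inequality ``is satisfied'' means ``holds for a given sign-pattern assignment of the $w_i$.'' Once that reading is fixed, both directions reduce to the elementary sign argument above, exactly parallel to how Lemma~\ref{lem:constraint} reduces satisfiability-for-all-nonnegative-values to sign conditions on the coefficients. I would therefore state the reading explicitly at the start of the proof and then give the two short sign computations.
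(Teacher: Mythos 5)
Your proposal founders on exactly the point you flagged as ``the crux,'' and the resolution you adopted is the wrong one. The lemma is not a statement about a single fixed assignment with fixed coefficients: the $w_i$'s are fixed integers (in the application they are the values, under a solution of $eq(\pi)$, of expressions in the variables of $eq(\pi)$), and ``the inequality is satisfied'' means that $\alpha_1 \cdot w_1 + \dots + \alpha_n \cdot w_n < 0$ holds \emph{for every} choice of positive integer coefficients $\alpha_1,\dots,\alpha_n$. This reading is forced by the way the lemma is used in Theorem~\ref{the:cycle-bounded-complexity}, where one must decide whether some solution of $eq(\pi)$ makes $\overline{\alpha}_p \cdot size(\rbody(r_1)) - \overline{\alpha}_p \cdot size(head(r_n)) \ge 0$ false for \emph{all} $\overline{\alpha}_p \in \mathbb{N}^{\ar(p)}$. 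Under the reading you settled on (fixed $\alpha_j$'s, fixed $w_i$'s, sign conditions on the $w_i$'s), the forward direction is simply false: take $n=2$, $\alpha_1=\alpha_2=1$, $w_1=1$, $w_2=-2$; then $\alpha_1 w_1 + \alpha_2 w_2 = -1 < 0$, yet $w_1 > 0$. You observed this failure yourself, but the sentence with which you dismissed it (``the intended reading pins each $w_i$ to sign constraints rather than magnitudes'') is not an argument and leaves the forward direction unproved --- indeed unprovable, since under your reading it is not true.

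With the correct quantification the proof is short and is the one the paper gives. The ($\Leftarrow$) direction is your sign computation, which goes through verbatim because it works for \emph{every} positive choice of the $\alpha_j$'s. For ($\Rightarrow$) one argues by contradiction, using precisely the freedom your reading throws away: if some $w_i > 0$, say $w_1 > 0$, fix the remaining coefficients arbitrarily and choose $\alpha_1 > |\alpha_2 \cdot w_2 + \dots + \alpha_n \cdot w_n|$; this choice of positive coefficients makes the sum positive, contradicting that the inequality holds for all choices. If instead every $w_i \le 0$ but none strictly, then all $w_i = 0$ and the sum is $0 \not< 0$ for any choice. Note the duality with Lemma~\ref{lem:constraint}, which you invoked as a model but whose quantifier pattern you transposed incorrectly: there the coefficients $\gamma_i$ are fixed and the variables range over all of $\mathbb{N}_0$; here the $w_i$'s are fixed and it is the coefficients that range over all of $\mathbb{N}$.
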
 
\begin{proof}
($\Leftarrow$) It follows straightforwardly from the fact that each $\alpha_j>0$ for every $j \in [1,n]$.

\noindent
($\Rightarrow$) By contradiction, assume that $(\ref{eq:lin-complement})$ is satisfied for every $\alpha_j>0$, where $j \in [1,n]$,
but either there is $i \in [1,n]$ such that $w_i>0$ or $w_i \le 0$ for every $i \in [1,n]$ but none of such inequalities is strict. 
If there is $i \in [1,n]$, ($i=1$, for example) such that $w_1 > 0$, then, since $\alpha_j>0$ for each $j \in [1,n]$, any assignment of
$\alpha_1,\dots,\alpha_n>0$ such that $\alpha_1 > |\alpha_2 \product w_2 + \dots + \alpha_n \product w_n|$ will not
satisfy $(\ref{eq:lin-complement})$. In the case whether no $w_i \le 0$ is strict, then $w_i=0$ for every $i \in [1,n]$ and thus
$\alpha_1  \product w_1 + \dots + \alpha_n \product w_n$ will be zero, which does not satisfy $(\ref{eq:lin-complement})$.
\hfill
\end{proof} 

The next result says that checking if a program $\PP$ is \gb\ is in $\co\NP$.
We recall that a given a set of linear constraints depending on some \nvars\ is satisfiable if there exist non-negative integer values of its \nvars\ that satisfy the constraints. A solution of such linear constraints is any assignment for their \nvars\ to some non-negative integer values satisfying the constraints.

\begin{theorem}\label{the:cycle-bounded-complexity}
Checking whether a program $\PP$ is \gb\ is in $\co\NP$.
\end{theorem}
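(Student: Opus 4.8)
The plan is to show membership in $\co\NP$ by exhibiting a polynomially-checkable certificate for the \emph{complement} of the problem, i.e., for the fact that $\PP$ is \emph{not} \gb. By Definition~\ref{def:gb}, $\PP$ is \gb\ iff every linear version in $\linear(\PP)$ is \gb, and by Definition~\ref{def:linear-gb} a linear program is \gb\ iff every relevant basic cyclic path $\pi$ is \gb. Hence $\PP$ is not \gb\ iff there is a witness consisting of a linear version $\PP' \in \linear(\PP)$ together with a relevant basic cyclic path $\pi = \<r_1,r_2\>\dots\<r_n,r_1\>$ of $\unig{\PP'}$ that is not \gb. A non-deterministic machine can guess $\PP'$ (choosing for each rule $r_i$ one atom of $\rbody(r_i)$, which has size $O(\ssize{\PP})$) and guess the cyclic path $\pi$. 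The first thing I would verify is that such a witness path can be taken to have polynomial length: since $\pi$ is \emph{basic}, no edge repeats, so $n \le |\unig{\PP'}|^2 = O(|\PP|^2)$, and therefore the guessed certificate has size polynomial in $\ssize{\PP}$.

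Next I would check, in polynomial time, that the guessed objects genuinely witness failure. Building $\unig{\PP'}$ is in $\PTIME$ by Lemma~\ref{lem:firing-graph-complexity}, and verifying that every $r_i$ is relevant and that the edges form a basic cyclic path is polynomial (relevance costs $O(\ssize{\PP})$ per rule, as noted after that lemma). Computing each mgu $\theta_i$ of $head(r_i)$ and $\rbody(r_{i+1})$, and hence assembling the equality set $eq(\pi) = \bigcup_{1 \le i < n} eq(\theta_i)$ and the cycle constraint $\overline{\alpha}_p \product size(\rbody(r_1)) - \overline{\alpha}_p \product size(head(r_n)) \ge 0$, is polynomial. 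So the only nontrivial part is certifying that $\pi$ fails the \gb\ condition of Definition~\ref{def:linear-gb}: either $eq(\pi)$ is unsatisfiable over the non-negative integers, or no vector $\overline{\alpha}_p \in \mathbb{N}^{\ar(p)}$ makes the cycle constraint hold for all non-negative \nvar\ values satisfying $eq(\pi)$.

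The crux is showing that this ``no good $\overline{\alpha}_p$ exists'' condition is itself polynomially checkable, so that the whole verification stays in $\PTIME$ and the complement lands in $\NP$. Here I would substitute the equalities $eq(\pi)$ into the cycle constraint to eliminate variables, obtaining after grouping a linear inequality in the surviving \nvars\ whose coefficients are the expressions $w_j$ built from the $\alpha$-coefficients (the form discussed before Lemma~\ref{lem:constraint}). By Lemma~\ref{lem:constraint}, the cycle constraint holds for all non-negative values of the \nvars\ iff every coefficient $w_j \ge 0$; thus a good $\overline{\alpha}_p$ exists iff the system $\{w_j \ge 0\}_j$ together with $\overline{\alpha}_p \in \mathbb{N}^{\ar(p)}$ (i.e.\ each component $\ge 1$) is satisfiable. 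Consequently no good $\overline{\alpha}_p$ exists iff this system is \emph{unsatisfiable}. To certify unsatisfiability in $\PTIME$ I would invoke Lemma~\ref{lem:inequality-equivalent-complement}, which characterizes when a strict inequality $\alpha_1 \product w_1 + \dots < 0$ in positive $\alpha$-coefficients holds, turning the infeasibility of the $\alpha$-system into a purely sign-based condition on the $w_j$'s that can be read off directly; combined with checking whether $eq(\pi)$ is unsatisfiable (a linear-integer feasibility test, decidable and, for these simple equality systems, polynomially so), this gives a deterministic polynomial verifier.

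The step I expect to be the main obstacle is precisely this last one: justifying that the quantifier alternation ``there is no $\overline{\alpha}_p$ such that for all \nvar-assignments satisfying $eq(\pi)$ the constraint holds'' collapses to a flat, polynomially-decidable condition rather than to a genuine $\PI{2}$-type test. The reduction via Lemma~\ref{lem:constraint} is what flattens the universal quantifier over the \nvars\ into finitely many coefficient inequalities, and Lemma~\ref{lem:inequality-equivalent-complement} is what handles the negation cleanly; I would need to argue carefully that after substituting $eq(\pi)$ the remaining \nvars\ are genuinely free (so Lemma~\ref{lem:constraint} applies verbatim) and that the resulting $\alpha$-system's infeasibility is detectable in $\PTIME$ via the sign characterization. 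Once that is in place, the complement of \gb-ness is verified by a polynomial-time deterministic check on a polynomial-size guessed certificate, so the complement is in $\NP$ and therefore checking whether $\PP$ is \gb\ is in $\co\NP$.
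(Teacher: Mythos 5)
Your proposal follows the same global strategy as the paper's proof: pass to the complement, guess a linear version $\PP'\in\linear(\PP)$ together with a basic cyclic path $\pi$ of $\unig{\PP'}$ (polynomial-size because $\pi$ is basic), and note that constructing $\unig{\PP'}$, checking relevance, and assembling $eq(\pi)$ and the cycle constraint are all polynomial. The divergence is exactly at the step you yourself flag as the obstacle, and there the proposal has a genuine gap. You substitute $eq(\pi)$ into the cycle constraint and then apply Lemma~\ref{lem:constraint} to the surviving \nvars. But that lemma requires the \nvars\ to range \emph{independently} over all non-negative integers, and $eq(\pi)$ is not a solved-form substitution: the sets $eq(\theta_i)$ chain through the \lvars\ shared by consecutive rules, so one \nvar\ may have several defining equations, and elimination leaves residual relations among the ``surviving'' \nvars\ (it can even be contradictory, e.g.\ forcing $w=2+w$, which is precisely why Definition~\ref{def:linear-gb} requires satisfiability of $eq(\pi)$). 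If, say, elimination leaves the relation $y+z=2w$, then $y+z-2w\ge 0$ holds on every solution although the coefficient of $w$ is negative; hence the coefficient-sign test is only a \emph{sufficient} condition for ``a good $\overline{\alpha}_p$ exists,'' not a necessary one. Consequently ``the $\alpha$-system is infeasible'' does not entail ``no good $\overline{\alpha}_p$ exists,'' and your verifier would declare not \gb\ some programs that are in fact \gb: it decides the wrong set. The paper avoids this entirely by never eliminating variables: it keeps $eq(\pi)$ as constraints and, for a hypothetical solution of $eq(\pi)$, applies Lemma~\ref{lem:inequality-equivalent-complement} to rewrite ``the constraint fails for every $\overline{\alpha}_p$'' as the sign conditions $w_i\le 0$ for all $i$ and $w_j<0$ for some $j$, where the $w_i$'s are expressions in the \nvars; it then guesses $j$ and checks satisfiability of the single system $eq(\pi)\cup\{w_1\le 0\}\cup\dots\cup\{w_j<0\}\cup\dots\cup\{w_n\le 0\}$, which is in $\NP$ by \cite{Papadimitriou81}.

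A second, related problem is that your use of Lemma~\ref{lem:inequality-equivalent-complement} is backwards. That lemma states, for \emph{fixed} integer values $w_1,\dots,w_n$, when $\alpha_1\product w_1+\dots+\alpha_n\product w_n<0$ holds for \emph{all} positive $\alpha_i$'s; it is not a criterion for infeasibility of a \emph{system} of linear inequalities in the unknowns $\overline{\alpha}_p$. System infeasibility can arise from interaction between constraints (e.g.\ $\alpha_1-\alpha_2\ge 0$ and $\alpha_2-2\alpha_1\ge 0$ are each feasible but jointly infeasible over positive values), so it cannot be ``read off'' from signs; it would need a Farkas/duality-style certificate, i.e.\ a universally-quantified subgoal that an $\NP$ verifier cannot discharge for free (your system is homogeneous in $\overline{\alpha}_p$, so rational LP feasibility would in fact suffice, but you neither observe this nor invoke it). Likewise, your claim that satisfiability of $eq(\pi)$ over the non-negative integers is polynomial ``for these simple equality systems'' is unsupported: these are linear Diophantine systems with non-negativity constraints, where parity conflicts such as $2+2y=3+2z$ arise, so rational reasoning does not settle them. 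The paper's reduction is engineered so that every remaining test is an $\NP$-satisfiability question absorbed into the nondeterministic guess, which is what makes the $\co\NP$ upper bound go through; your reduction instead leaves infeasibility subgoals on the wrong side of the quantifier.
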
\label{the:cycle-bounded-complexity}
\begin{proof}
In order to prove the claim, we focus on the complement of our problem.
By definition, a program $\PP$ is not \gb\ if  there exists a linear version $\PP'$ of $\PP$ which is not \gb, which means that a relevant basic cyclic path 
$\pi=\<r_1,r_2\>\dots\<r_n,r_1\>$ of $\unig{\PP'}$ is
such that either $eq(\pi)$ is not satisfiable or there is a
solution of $eq(\pi)$ for which the inequality $\overline{\alpha}_p \product size(rbody(r_1)) - \overline{\alpha}_p \product size(head(r_n)) \ge 0$
is false, for every $\overline{\alpha}_p \in \mathbb{N}^{\ar(p)}$. Checking the statement above can be carried out by the following non-deterministic procedure.

Guess a linear version $\PP'$ of $\PP$ and a  basic cyclic path $\pi$ of $\unig{\PP'}$ and check
If $\pi$ is relevant. if it is not, then reject (i.e., the program is \gb). Then, check if $eq(\pi)$ is satisfiable, if it is not then accept (i.e., the program
is not \gb). 
Now, it remains to check  whether there is a solution of $eq(\pi)$ such that $\overline{\alpha}_p \product size(rbody(r_1)) - \overline{\alpha}_p \product size(head(r_n)) \ge 0$ is false for all $\overline{\alpha}_p \in \mathbb{N}^{\ar(p)}$. To accomplish the aforementioned task,
we can check wheher 
$\overline{\alpha}_p \product size(rbody(r_1)) - \overline{\alpha}_p \product size(head(r_n)) < 0$
is true. 
Moreover, isolating every term $\overline{\alpha}_p[i]$ ($1 \le i \le \ar(p)$) in the inequality, we get an expression of the form
$\overline{\alpha}_p[1] \product w_1 + \dots + \overline{\alpha}_p[\ar(p)] \product w_{\ar(p)}<0 $, where each $w_i$ depends only on variables 
occurring in $eq(\pi)$. 
Since from Lemma~\ref{lem:inequality-equivalent-complement}, this is equivalent to check 
whether $w_i \le 0$ for $i \in [1,n]$ and there is $j \in [1,n]$ such that $w_j<0$, 
checking whether there is a solution of $eq(\pi)$ such that 
$\overline{\alpha}_p \product size(rbody(r_1)) - \overline{\alpha}_p \product size(head(r_n))\ge0$ 
is false for all $\overline{\alpha}_p \in \mathbb{N}^{\ar(p)}$ is equivalent to
guessing a $j \in [1,n]$ and check that the set of linear constraints 
$eq(\pi) \cup \{ w_1 \le 0 \} \cup \cdots \cup \{w_j < 0\} \cup \cdots \cup \{w_n \le 0\}$ 
is satisfiable. 
The input program is not \gb\ iff the previous set of linear constraints is satisfiable.

To show the desired upper bound, note that guessing a linear version $\PP'$ of $\PP$ and a basic cyclic path of $\unig{\PP'}$ can be done in non-deterministic polynomial time, since $|\PP'|=|\PP|$ and the maximum length of a 
basic cyclic path coincides with the number of edges of $\unig{\PP'}$. Moreover, as previously stated, constructing the firing graph is feasible in deterministic polynomial time. Furthermore, the construction of $eq(\pi)$ can be carried on in polynomial time too, by using a polynomially sized representation of the mgu's
of the rules occurring in $\pi$ \cite{Venturini75}. Finally, as shown in \cite{Papadimitriou81}, checking whether the set of linear constraints 
$eq(\pi) \cup \{ w_1 \le 0 \} \cup \cdots \cup \{w_j < 0\} \cup \cdots \cup \{w_n \le 0\}$ 
is satisfiable is in $\NP$.~\hfill
\end{proof}

\section{Related Work}\label{sec:related-work}
  
A significant body of work has been done on termination of logic programs under top-down evaluation 
\cite{Schreye94,Voets11,Marchiori96,Ohlebusch01,CodishLS05,SerebrenikS05,NishidaV10,Schneider-KampGST09,Schneider-KampGSST10,NguyenGSS07,BruynoogheCGGV07,Bonatti04,BaseliceBC09}
and in the area of term rewriting~\cite{Zantema95,SternagelM08,ArtsG00,EndrullisWZ08,FerreiraZ96}.
Termination properties of query evaluation for normal programs under tabling have been studied in~\cite{RiguzziS13,RiguzziS13TR,VerbaetenSS01}.

In this paper, we consider logic programs with function symbols \emph{under the stable model semantics}~\cite{GelLif88,GelLif91} (recall that, as discussed in Section~\ref{sec:locally_bounded}, our approach can be applied to programs with disjunction and negation by transforming them into positive normal programs), and thus all the excellent works above cannot be straightforwardly applied to our setting---for a discussion on this see, e.g.,~\cite{CalimeriCIL08,AlvianoFL10}.
In our context, \cite{CalimeriCIL08} introduced the class of \emph{finitely-ground programs},
guaranteeing the existence of a finite set of stable models, each of finite size, for programs in the class.
Since membership in the class is not decidable, decidable subclasses have been proposed:
\emph{$\omega$-restricted programs},
\emph{$\lambda$-restricted programs},
\emph{finite domain programs},
\emph{argument-restricted programs},
\emph{safe programs},
\emph{$\Gamma$-acyclic programs},
\emph{mapping-restricted programs}, and 
\emph{bounded programs}.
An adornment-based approach that can be used in conjunction with the techniques above to detect more programs as finitely-ground has been proposed in~\cite{GrecoMT13iclp}.
This paper refines and extends~\cite{CalauttiGMT14}.

Compared with the aforementioned classes, \locally\ and \gb\ programs allow us to perform a more global analysis and identify many practical programs as terminating, such as those where terms in the body are rearranged in the head, which are not included in any of the classes above. 
We observe that there are also programs which are not \locally\ or \gb\ but are recognized as terminating by some of the aforementioned techniques (see Theorems~\ref{th:comparison-1}~and~\ref{th:comparison-2}).

Similar concepts of ``term size'' have been considered to check termination of logic programs evaluated in a top-down fashion~\cite{Sohn1991}, 
to check local stratification of logic programs \cite{Palopoli92}, in the context of partial evaluation to provide conditions for strong termination and quasi-termination~\cite{Vidal07,LeuschelVidal2014}, and in the context of tabled resolution~\cite{RiguzziS13,RiguzziS13TR}.
These approaches are geared to work under top-down evaluation, looking at how terms are propagated from the head to the body, while our approach is developed to work under bottom-up evaluation, looking at how terms are propagated from the body to the head.
This gives rise to significant differences in how the program analysis is carried out, making one approach not applicable in the setting of the other.
As a simple example, the rule $\tt p(X) \leftarrow p(X)$ leads to a non-terminating top-down evaluation, while it is completely harmless under bottom-up evaluation.

We conclude by mentioning that our work is also related to research done on termination of the chase procedure, where existential rules are considered
~\cite{Marnette-09,SpezzanoG10,GrecoST11}; a survey on this topic can be found in~\cite{Morgan12Greco}.
Indeed, sufficient conditions ensuring termination of the bottom-up evaluation of logic programs can be directly applied to existential rules.
Specifically, one can analyze the logic program obtained from the skolemization of existential rules, where existentially quantified variables are replaced with complex terms~\cite{Marnette-09}.
In fact, the evaluation of such a program behaves as the ``semi-oblivious'' chase~\cite{Marnette-09}, whose termination guarantees the termination of the standard chase~\cite{MeierThesis,Onet13}.

\section{Conclusions}\label{sec:conclusions}

As a direction for future work, we plan to investigate how our techniques can be combined with current termination criteria in a uniform way.
Since they look at programs from radically different standpoints,  
an interesting issue is to study how they can be integrated so that they can benefit from each other.

To this end, an interesting approach would be to plug termination criteria in the generic framework proposed in~\cite{EiterFKR13} and study their combination in such a framework. Another intriguing issue would be to analyze the relationships between the notions of safety of~\cite{EiterFKR13} and the notions of boundedness used by termination criteria.

\bibliographystyle{plain}
\bibliography{ProgramTermination}

\end{document}